\documentclass{article} 
\usepackage{iclr2027_conference,times}

\usepackage{amsmath,amsfonts,bm}

\def\eqref#1{equation~\ref{#1}}

\def\1{\bm{1}}

\DeclareMathAlphabet{\mathsfit}{\encodingdefault}{\sfdefault}{m}{sl}
\SetMathAlphabet{\mathsfit}{bold}{\encodingdefault}{\sfdefault}{bx}{n}

\DeclareMathOperator*{\argmin}{arg\,min}

\usepackage[dvipsnames]{xcolor}
\definecolor{fav_color}{RGB}{0,128,0}
\usepackage[
colorlinks=true,
allcolors = fav_color,
citecolor=fav_color
]{hyperref}
\usepackage[utf8]{inputenc} 
\usepackage[T1]{fontenc}    
\usepackage{url}            
\usepackage{booktabs}       
\usepackage{amsfonts}       
\usepackage{nicefrac}       
\usepackage{microtype}      
\usepackage{xcolor}         
\usepackage[most]{tcolorbox}
\usepackage{bm}
\usepackage{amsmath}
\usepackage{amssymb}
\usepackage{mathtools}
\usepackage{amsthm}
\usepackage{graphicx}
\usepackage{floatrow}
\usepackage{subcaption}
\usepackage{caption}
\usepackage{wasysym}
\usepackage{bbm}
\usepackage{multirow}
\usepackage{enumitem}

\usepackage{algorithm}
\usepackage{algorithmic}
\usepackage{amsmath}
\usepackage{amssymb}
\usepackage{mathtools}
\usepackage{amsthm}
\usepackage[capitalize,noabbrev]{cleveref}

\theoremstyle{plain}
\newtheorem{theorem}{Theorem}[section]
\newtheorem{proposition}[theorem]{Proposition}
\newtheorem{lemma}[theorem]{Lemma}

\theoremstyle{definition}

\theoremstyle{remark}

\tcolorboxenvironment{theorem}{
  enhanced,
  breakable,
  colframe=fav_color,
  colback=fav_color!8,     
  boxrule=1pt,
  toprule=1pt,
  left=4pt, right=4pt, top=4pt, bottom=4pt,
  arc=2pt 
}

\tcolorboxenvironment{proposition}{
  enhanced,
  breakable,
  colframe=fav_color,
  colback=fav_color!8,    
  boxrule=1pt,
  toprule=1pt,
  left=4pt, right=4pt, top=4pt, bottom=4pt,
  arc=2pt 
}

\title{An Informational Curse of Horizon in \\Goal-Conditioned Policy Learning}

\author{John L. Zhou\thanks{Equal contribution} \quad Yuxuan Dong\footnotemark[1] \quad Jonathan C. Kao\\
University of California, Los Angeles\\
\texttt{john.ly.zhou@gmail.com}, \texttt{dyx723@g.ucla.edu} \\
}

\iclrfinalcopy 
\begin{document}

\maketitle
\vspace{-0.1cm}
\begin{abstract}
The difficulty of learning goal-reaching policies is often attributed to a ``curse of horizon'' that manifests as bias accumulation in temporal-difference backups and noisy advantage estimates. In this work, we identify an additional \textit{informational} curse of horizon in goal-conditioned policy learning, where increasing the goal relabeling horizon can significantly reduce policy generalization and performance. Through a series of controlled experiments with oracle planners, we decouple the goal horizons sampled during training from those that the policy is asked to reach at test time. Even when evaluated only on a sequence of nearby subgoals, goal-conditioned behavioral cloning (BC) policies suffer from severe, training horizon-dependent performance degradation that is mitigated by reinforcement learning (RL) objectives. We explain this phenomenon as a horizon-dependent decrease in the conditional mutual information between actions and hindsight-relabeled goals, and find empirically that both BC and RL policies trained on longer-horizon goals exhibit a shift in sensitivity from goal to state information, as measured by the policy's input Jacobians. Motivated by this observation, we find that distilling the input Jacobians of short-horizon policies into long-horizon policies yields significant performance gains, especially in combinatorial manipulation tasks. Taken together, our results highlight goal relabeling horizon as an important consideration when learning generalist policies from offline data.
\end{abstract}

\section{Introduction}

Learning to reach any state from any other state is an elegantly simple problem that allows an agent to acquire diverse behaviors from data, without supervision \citep{kaelbling_learning_1993}. A core component of standard goal-conditioned policy learning involves hindsight goal relabeling \citep{andrychowicz_hindsight_2017}, which allows an agent to learn from arbitrary experiences regardless of the actor's original intentions. However, simply providing a policy with auxiliary input in the form of goal observations or natural language instructions does not necessarily produce a policy that is capable of following such instructions. A goal-reaching policy must learn to meaningfully utilize information about the desired goal to generate actions, and we argue that this dependence can only emerge when knowing the goal provides meaningful \textit{additional} information about the policy learning objective.

We formalize this additional goal-related information as the conditional mutual information (MI) $\mathcal I(A; G \mid S)$ between the goal $G$ and action $A$ conditioned on the current state $S$, which we refer to as the \textbf{action-goal mutual information}. Under standard hindsight relabeling, action-goal MI is non-increasing with the relabeling horizon [\Cref{prop:mi_decay}]. This points to a distinct \textit{informational} curse of horizon in goal-conditioned policy learning: increasing the horizon of relabeled training goals not only introduces well-characterized learning challenges \citep{park_horizon_2025}, but may also alter \textit{how} the policy is learned. Specifically, as distant goals provide progressively less information about which action should be taken, the policy may converge to a solution that is globally less sensitive to goal information---even when conditioned on nearby goals.

However, disentangling these two effects is surprisingly difficult in standard goal-reaching benchmarks \citep{park_ogbench_2024}. Long-horizon evaluation goals are typically more challenging and require longer sequences of correct actions, while policies must also be trained on distant goals to simply keep them in-distribution at evaluation time. This makes it virtually impossible to distinguish failures due to increased task difficulty from failures due to weaker goal conditioning. 

To isolate the effect of training horizon $H_\mathrm{train}$ on performance, we construct a series of controlled experiments using an oracle hierarchical planner that ``prompts'' a learned, goal-conditioned policy with optimal, fixed-distance (sub)goals at test time. We find that behavioral cloning (BC) policies degrade severely as $H_\mathrm{train}$ increases, even when the distance of goals presented at test time $H_\mathrm{plan}$ remains fixed. On the other hand, reinforcement learning (RL) policies are substantially more robust to increased training goal horizons, although this robustness is closely tied to the RL objective (weighted BC or reparameterized policy gradient) and policy class (Gaussian or conditional flow matching) [\Cref{subsec:empirical_oracle}]. These results suggest that poor policy performance is not only due to higher task difficulty, but also a degradation in the learned policy's ability to achieve simpler goals.

Our theoretical analysis predicts that longer-horizon goals provide less action-relevant information, but how might this be reflected in a policy parameterized by a neural network? In our empirical studies, we use the Jacobian norm of the action with respect to the state or goal as a local measure of policy sensitivity. For certain parameterizations, e.g., fixed-variance Gaussians, the Jacobian norm can be directly related to Fisher information [Box \ref{box:fisher}]. Across BC and RL objectives, as well as Gaussian and flow-matching policy classes, we find that increasing the training goal horizon systematically shifts sensitivity away from goal conditioning towards the current state. Furthermore, explicitly reducing the action-goal MI by training on some proportion of arbitrary, unrelated goal states produces a similar reduction in goal sensitivity.

To determine if there is a causal relationship between goal sensitivity and performance, we then ask if preserving the sensitivity structure of policies trained exclusively on short-horizon goals can improve the performance of policies trained on longer-horizon goals. We propose \textbf{Sobolev bootstrapping}, a simple modification of existing policy bootstrapping methods that additionally distills the \textit{input Jacobians} of a short-horizon teacher policy into a long-horizon student policy. This simple intervention significantly boosts performance in a suite of \textbf{16} diverse locomotion and manipulation environments, with especially large gains in combinatorial manipulation tasks.

\section{Preliminaries}
We consider the problem of offline {goal-conditioned RL}, described by a Markov decision process (MDP) $\mathcal{M} = (\mathcal{S}, \mathcal{A}, \mathcal{R}, \mathcal{P}, \mathcal G)$ where $\mathcal{S}$ is the state space, $\mathcal{A}$ the action space, $\mathcal{R} : \mathcal S \times \mathcal S \rightarrow \mathbb R$ the goal-conditioned reward function, $\mathcal{P}: \mathcal S \times \mathcal A \to \mathcal S$ the transition function, and $\mathcal G$ the goal space. In this work, we assume the goal space is equivalent to the state space.

In the \textit{offline} setting, we are given a reward-free dataset of environment interactions $\mathcal D$ and asked to learn a policy $\pi(\cdot \mid s, g)$ to reach a specified goal $g$. At test time, the learned policy is deployed into the environment for a set of evaluation goals drawn from some goal distribution $p(g)$. Then, we can write the objective as
\begin{align}
  \mathcal J(\pi)=\mathbb{E}_{g \sim p(g), \tau \sim p^\pi(\tau)}\left[\sum_{t=0}^\infty \gamma^t r\left(s_t, g\right)\right],
\end{align}
where $p^\pi(\tau)$ is the distribution of trajectories generated by the policy $\pi$ and the transition function $\mathcal P$ during (online) evaluation. Importantly, we do not assume access to $p(g)$ and must learn a general policy that is capable of reaching any goal encountered in the dataset.

\subsection{Input Jacobians and policy sensitivity}
\label{subsec:input_jacobian_def}
Our analyses draw inspiration from prior work that studies input gradients, defined simply as the gradient of the model output with respect to its inputs, from the perspective of classifier robustness in computer vision applications. In our work, we extend these ideas to the parameters of a policy action distribution $\pi$. For exposition, we first study policies modeled as fixed-variance Gaussian distributions, where $a \sim \mathcal N(\mu_\theta(s, g), \mathbf I)$. Differentiating this output with respect to the state or goal yields the policy's \textit{state and goal Jacobians},
\begin{align}
  \label{eq:input_jacobian}
  J_{\pi}(s) = \frac{\partial\mu_\theta( a\mid s, g)}{\partial s} \quad \text{and} \quad J_{\pi}(g) = \frac{\partial\mu_\theta( a\mid s, g)}{\partial g}.
\end{align}
Prior work has established that properties of input gradients such as the magnitude of the $\ell_1$ norm and degree of alignment with human perception strongly correlate with model robustness \citep{tsipras_robustness_2018,ganz_perceptually_2023}, and either regularizing the input gradient norm \citep{drucker_improving_1992,ross_improving_2018,rodriguez-munoz_characterizing_2025} or distilling input gradients from a robust teacher through Sobolev training \citep{czarnecki_sobolev_2017,chan_what_2020} can significantly improve generalization under perturbations.

\section{An Informational Curse of Horizon}
\label{sec:curse}

\begin{figure}[t]
  \begin{center}
  \includegraphics[width=1.0\linewidth]{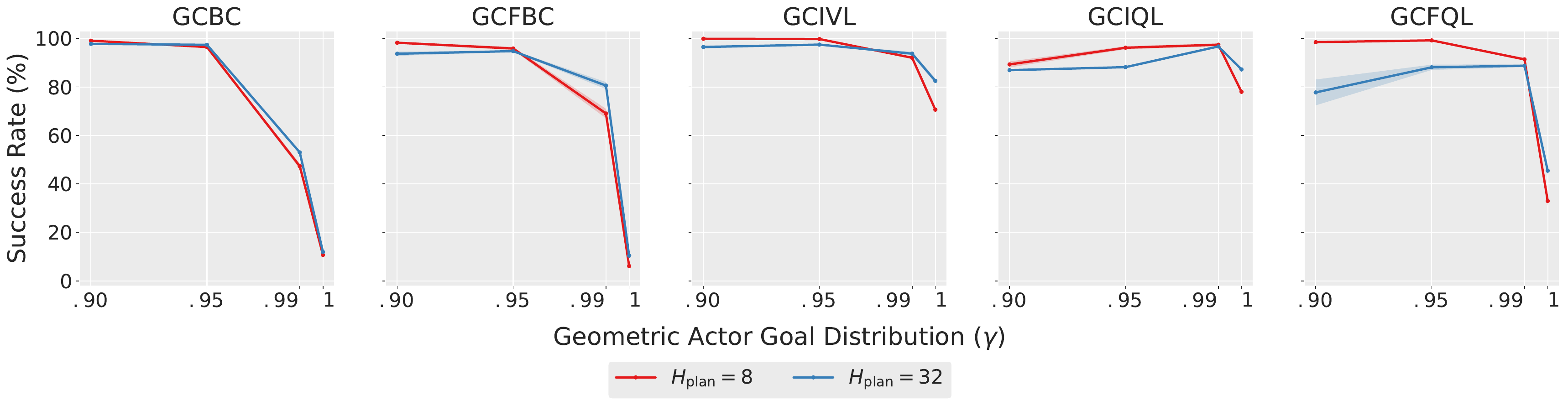}
  \caption{\textbf{Training on more distant relabeled goals harms \textit{nearby} goal-reaching capabilities}. We record the success rates across policies trained on a geometric distribution of relabeling goal offsets with discount $\gamma$, where $\gamma=1$ corresponds to a uniform distribution over future states. At evaluation, policies are given optimal $H_\mathrm{plan}$-step subgoals from \textit{oracle planners}. Success rates are averaged over the last three checkpoints, three seeds, and four environments: \texttt{cube-double}, \texttt{cube-triple}, \texttt{cube-quadruple}, and \texttt{scene}.}
  \label{fig:geom_horizon_oracle_success}
  \end{center}
\end{figure}

Horizon has many opportunities to influence the performance of a goal-reaching policy in subtly different ways. Longer evaluation horizons allow compounding error to accumulate over time, reducing the success rate of a policy with a fixed per-step error rate $\epsilon$ \citep{ross_efficient_2010}. Furthermore, directly learning to reach distant goals is simply \textit{harder} in the first place, due to noisy advantage estimates and compounding bias in value learning [Appendix \ref{subsec:curse}].

We identify a third challenge in which increasing the horizon of relabeled training goals, $H_\mathrm{train}$, can degrade the policy's ability to reach nearby goals as well. Said differently, the difficulties induced by long-horizon training \textit{are not confined to reaching distant goals}. 

To study this effect, we introduce a third horizon: the planning horizon, $H_\mathrm{plan}$, which separates the total evaluation horizon $H_\mathrm{eval}$ from the horizon of the goals presented to the learned policy. Using an oracle planner, we decouple $H_\mathrm{plan}$ from $H_\mathrm{train}$ and $H_\mathrm{eval}$. This reveals severe, training horizon-dependent degradation for goal-conditioned BC policies and a milder, albeit objective-dependent form of the same phenomenon in RL policies. We provide an information-theoretic explanation for these findings and study how it manifests in policies parameterized by neural networks.

\subsection{How does horizon affect performance?}
\label{subsec:empirical_oracle}
To isolate policy quality from task difficulty, we use an oracle planner that presents the policy with optimal subgoals at a fixed $H_\mathrm{plan}$ while leaving the underlying evaluation task unchanged. $H_\mathrm{plan}$ is similar in spirit to the subgoal horizon used in hierarchical approaches, but here we use it only at evaluation rather than learning a separate high-level policy.

We vary the distribution over training goal horizons in three complementary ways. First, we sample future in-trajectory goals from a geometric distribution with increasing discount factors $\gamma$, which assign more probability mass to longer horizons while maintaining the same goal support set [Figures \ref{fig:geom_horizon_oracle_success} \& \ref{fig:all_geom_horizon_oracle_success}]. Then, we directly control the range of goal distances the actor sees by truncating uniform distributions to different maximum offsets [Figure \ref{fig:uniform_horizon_oracle_success_all}]. Finally, we mix in arbitrary, unrelated dataset states as goals, directly weakening the predictive relationship between actions and goals [Figure \ref{fig:all_random_oracle_success}].

Across both the fixed- and geometric-horizon experiments, we observe a clear degradation in nearby goal-reaching competence as BC policies are trained on increasingly distant relabeled goals. Because the evaluation task and $H_\mathrm{plan}$ are fixed, this effect cannot simply be attributed to the policy being asked to reach more distant goals. Instead, it is clear that increasing $H_\mathrm{train}$ changes the quality of the learned actor itself. RL policies exhibit a milder version of the same phenomenon, where the degree of robustness to horizon depends heavily on the objective. 

Our results demonstrate more nuance than a monotonic relationship between training goal horizon and performance. Geometric relabeling can retain strong performance even with exposure to long-horizon goals, and providing unrelated dataset states as goals results in consistent performance decline across all algorithms. Taken together, our experiments suggest that temporal or spatial training goal distance alone cannot fully account for policy degradation. To resolve this discrepancy, we hypothesize that the informational relationship between actions and goals is the critical factor. Next, we formalize this intuition and investigate how training on extended horizons alters the learned policy for nearby goals.

\subsection{Why should training horizon matter?}
\label{sec:mi_theory}

To understand why training on extended horizons can degrade nearby goal-reaching performance [\Cref{subsec:empirical_oracle}], we theoretically analyze the divergence between a goal-conditioned policy and its unconditional behavioral prior, focusing on how sources of goal-dependent action information change with relabeling horizon. To make the analysis tractable, we focus on the KL-regularized objective corresponding to \citet{peng_advantage-weighted_2019}. We use uppercase letters (e.g., $S_t$) to denote random variables, and lower-case letters (e.g., $s_t$) to denote their instantiations. The full proof is provided in \Cref{appendix:proof}.

\begin{theorem}[Divergence Bound from the Goal-Agnostic Prior]
\label{thm:tvd_bound}
Let $\pi^{*}(a \mid s, g) \propto p_{\mathrm{HR}}(a \mid s, g) \exp(Q(s, a, g) / \alpha)$ be the target policy induced by a KL-regularized policy update, where $\alpha$ is a temperature parameter and $p_{\mathrm{HR}}$ is the distribution of tuples $(S, A, G)=\left(S_t, A_t, S_{t+K}\right)$ induced by the dataset trajectory distribution and a hindsight relabeling rule $K \sim q_{\mathrm{HR}}$. Let $\beta(a \mid s)$ be the goal-agnostic behavior policy induced by the dataset, and let the critic value span be $\Delta_Q(s, g) = \sup_{a} Q(s, a, g) - \inf_{a} Q(s, a, g)$ over the support of $p_{\mathrm{HR}}$. Then, the expected squared total variation distance between the two policies is bounded by:
$$\mathbb{E}_{S, G}\left[D_{\mathrm{TV}}\left(\pi^{*}(\cdot \mid S, G), \beta(\cdot \mid S)\right)^2\right] \leq \frac{\mathbb{E}_{S, G}\left[\Delta_Q(S, G)^2\right]}{8 \alpha^2} + I_{\mathrm{HR}}(A ; G \mid S)$$
\end{theorem}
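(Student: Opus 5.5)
The plan is to route the comparison through the intermediate distribution $p_{\mathrm{HR}}(\cdot \mid s, g)$. Then apply Pinsker's inequality to each half, so that the two terms in the bound come from two separate KL divergences. Concretely, for each $(s,g)$ the triangle inequality gives
\begin{align*}
D_{\mathrm{TV}}\bigl(\pi^*(\cdot\mid s,g),\beta(\cdot\mid s)\bigr) \le D_{\mathrm{TV}}\bigl(\pi^*(\cdot\mid s,g),p_{\mathrm{HR}}(\cdot\mid s,g)\bigr) + D_{\mathrm{TV}}\bigl(p_{\mathrm{HR}}(\cdot\mid s,g),\beta(\cdot\mid s)\bigr).
\end{align*}
Squaring and using $(x+y)^2\le 2x^2+2y^2$ would lose a factor of two. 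That loss is incompatible with the stated constants $1/(8\alpha^2)$ and $1$. So I would either keep the constants via a sharper combination or accept that the intended argument uses Pinsker in the form $D_{\mathrm{TV}}^2\le \tfrac12 D_{\mathrm{KL}}$. With that form, the two factors of $2$ and $\tfrac12$ cancel exactly:
\begin{align*}
D_{\mathrm{TV}}(\pi^*,\beta)^2 \le 2D_{\mathrm{TV}}(\pi^*,p_{\mathrm{HR}})^2 + 2D_{\mathrm{TV}}(p_{\mathrm{HR}},\beta)^2 \le D_{\mathrm{KL}}(\pi^*\|p_{\mathrm{HR}}) + D_{\mathrm{KL}}(p_{\mathrm{HR}}\|\beta).
\end{align*}
This yields the clean additive form.

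\textbf{Second (mutual information) term.} Identify $\beta(a\mid s)$ with the marginal $p_{\mathrm{HR}}(a\mid s)$. This holds because relabeling does not change the marginal of $(S_t,A_t)$, so $\beta(a\mid s)=\int p_{\mathrm{HR}}(a\mid s,g)\,p_{\mathrm{HR}}(g\mid s)\,dg$. Then
\begin{align*}
\mathbb{E}_{S,G}\bigl[D_{\mathrm{KL}}(p_{\mathrm{HR}}(\cdot\mid S,G)\,\|\,\beta(\cdot\mid S))\bigr] = I_{\mathrm{HR}}(A;G\mid S).
\end{align*}
This is the definition of conditional mutual information, with the expectation taken under $p_{\mathrm{HR}}$.

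\textbf{First (critic) term.} Here the goal is to show $D_{\mathrm{KL}}(\pi^*\|p_{\mathrm{HR}})\le \Delta_Q^2/(8\alpha^2)$ pointwise. Write $\pi^*=p_{\mathrm{HR}}e^{f}/Z$ with $f=Q/\alpha$ and $Z=\mathbb{E}_{p_{\mathrm{HR}}}[e^f]$. Then
\begin{align*}
D_{\mathrm{KL}}(\pi^*\|p_{\mathrm{HR}})=\mathbb{E}_{\pi^*}[f]-\log Z.
\end{align*}
This is a Bregman-type gap of the log-moment-generating function $\psi(\lambda)=\log\mathbb{E}_{p_{\mathrm{HR}}}[e^{\lambda f}]$ between $\lambda=0$ and $\lambda=1$. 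Explicitly, it equals $\psi'(1)-\psi(1)+\psi(0)=\int_0^1 \lambda\,\psi''(\lambda)\,d\lambda$. The second derivative $\psi''(\lambda)$ is the variance of $f$ under the tilted distribution, which is supported on the same set. Since $f$ has range $\Delta_Q/\alpha$, Popoviciu's inequality bounds this variance by $\Delta_Q^2/(4\alpha^2)$. Integrating $\lambda$ over $[0,1]$ contributes a factor $\tfrac12$, which gives exactly $\Delta_Q^2/(8\alpha^2)$. Taking expectations over $(S,G)$ then finishes the proof.

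\textbf{Main obstacle.} The delicate point is the constant bookkeeping in the first paragraph. If one uses the standard Pinsker form $D_{\mathrm{TV}}\le\sqrt{D_{\mathrm{KL}}/2}$ together with $(x+y)^2\le2x^2+2y^2$, the constants work out exactly. So I would commit to that combination and check that no slack remains. A second, more minor point is ensuring that the support of the tilted measures is within the support of $p_{\mathrm{HR}}$, which justifies taking $\Delta_Q$ over that support only. I would also verify the identification $\beta(\cdot\mid s)=p_{\mathrm{HR}}(\cdot\mid s)$, so that the expected KL is precisely the conditional MI rather than merely an upper bound on it.
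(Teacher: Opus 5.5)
Your proposal is correct and follows the paper's proof essentially step for step. The shared steps are the triangle inequality through $p_{\mathrm{HR}}(\cdot\mid s,g)$, then $(x+y)^2\le 2x^2+2y^2$ combined with Pinsker's $D_{\mathrm{TV}}^2\le\tfrac12 D_{\mathrm{KL}}$ (so, as you eventually conclude, no constant is lost), then the identification $\beta(\cdot\mid s)=p_{\mathrm{HR}}(\cdot\mid s)$ to obtain $I_{\mathrm{HR}}(A;G\mid S)$, and finally a Hoeffding-type bound $D_{\mathrm{KL}}(\pi^*\|p_{\mathrm{HR}})\le\Delta_Q^2/(8\alpha^2)$. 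The only difference is cosmetic: you use the integral remainder $\int_0^1\lambda\,\psi''(\lambda)\,d\lambda$ with Popoviciu's variance bound, whereas the paper uses the Lagrange remainder $\tfrac12\psi''(c)$ and cites Hoeffding's lemma, which rests on the same variance bound.
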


This bound reveals that $\pi^{*}$ degenerates toward the goal-agnostic prior when the value function weakly distinguishes among supported actions relative to $\alpha$, and the relabeled goal contains little additional information about the action when conditioned on the current state. Critically, both terms decay over the horizon. Notably, the first value-based term applies only to RL, whereas the second is applicable to both BC and RL.

\textbf{The diminishing action gap.} The action gap quantifies the difference in value between the optimal action and sub-optimal actions at a given state \citep{farahmand_action-gap_2011}. In goal-conditioned RL with sparse rewards and a discount factor $\gamma < 1$, the relative advantage of any single primitive action decays exponentially as the goal distance $d(s,g)$ increases. Consequently, the expected value span $\Delta_Q(S, G)$ shrinks, and the $Q$-values provide progressively less information about the optimal action distribution. As $d(s,g) \to \infty$, the first term approaches zero.

\begin{proposition}[Monotonic Decay of Mutual Information]
\label{prop:mi_decay}
Let dataset trajectories be generated by a Markovian behavior policy and transition kernel. For any time horizon (relabeling offset) $k' > k > 0$, given the current state $S_t$, the sequence $A_t \rightarrow S_{t+k} \rightarrow S_{t+k'}$ forms a conditional Markov chain. The mutual information between the action and future states monotonically decreases over time:
\begin{align*}
  I\left(A_t ; S_{t+k} \mid S_t\right) &- I\left(A_t ; S_{t+k'} \mid S_t\right) \\
  &= \mathbb{E}_{S_t, S_{t+k}, S_{t+k'}}\left[ D_{\mathrm{KL}}\left(p(A_t \mid S_t, S_{t+k}) \Vert{} p(A_t \mid S_t, S_{t+k'})\right) \right] \geq 0
\end{align*}

This inequality is strict when there exist at least two intermediate states, $s_{t+k}^{(1)}$ and $s_{t+k}^{(2)}$, that share a common future successor state $s_{t+k'}$ with non-zero probability, but yield distinct intermediate action posteriors $p(A_t \mid s_t, s_{t+k}^{(1)}) \neq p(A_t \mid s_t, s_{t+k}^{(2)})$.
\end{proposition}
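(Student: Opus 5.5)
The plan is to first establish the conditional Markov structure and then extract the KL identity from the chain rule of mutual information. For the Markov property, I will use that the trajectory is generated by a Markovian behavior policy $\beta(a\mid s)$ and kernel $\mathcal P$. Conditioned on $S_t$ and $S_{t+k}$, the future $S_{t+k'}$ depends only on $S_{t+k}$. It is independent of $A_t$ and of anything before time $t+k$. This holds because the process $(S_{t+k}, A_{t+k}, S_{t+k+1},\dots)$ is generated from $S_{t+k}$ alone. Concretely, I will write the joint density $p(s_t,a_t,s_{t+k},s_{t+k'})$ by marginalizing the trajectory factorization. I will then show that it factors as $p(s_t)\,p(a_t\mid s_t)\,p(s_{t+k}\mid s_t,a_t)\,p(s_{t+k'}\mid s_{t+k})$. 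This gives $p(s_{t+k'}\mid s_t,a_t,s_{t+k})=p(s_{t+k'}\mid s_{t+k})$, which is the conditional chain $A_t\to S_{t+k}\to S_{t+k'}$ given $S_t$. It also gives $p(a_t\mid s_t,s_{t+k},s_{t+k'})=p(a_t\mid s_t,s_{t+k})$.

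Next comes the identity. I will expand $I(A_t; S_{t+k}, S_{t+k'}\mid S_t)$ via the chain rule in two orders:
\begin{align*}
I(A_t;S_{t+k},S_{t+k'}\mid S_t) &= I(A_t;S_{t+k}\mid S_t) + I(A_t;S_{t+k'}\mid S_t,S_{t+k}) \\
&= I(A_t;S_{t+k'}\mid S_t) + I(A_t;S_{t+k}\mid S_t,S_{t+k'}).
\end{align*}
The Markov property makes $I(A_t;S_{t+k'}\mid S_t,S_{t+k})=0$. Hence the difference equals $I(A_t;S_{t+k}\mid S_t,S_{t+k'})$. Writing this as an expected log ratio gives $\mathbb E\left[\log p(A_t\mid S_t,S_{t+k},S_{t+k'})/p(A_t\mid S_t,S_{t+k'})\right]$. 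Replacing the numerator by $p(A_t\mid S_t,S_{t+k})$, via the Markov property, and taking the inner expectation over $A_t$ given $(S_t,S_{t+k},S_{t+k'})$ yields exactly $\mathbb E_{S_t,S_{t+k},S_{t+k'}}\left[D_{\mathrm{KL}}\left(p(A_t\mid S_t,S_{t+k})\Vert p(A_t\mid S_t,S_{t+k'})\right)\right]$. Nonnegativity follows from Gibbs' inequality. For general (continuous) state spaces I would phrase this with densities or Radon--Nikodym derivatives; measure-theoretic care here is the only routine nuisance.

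For strictness, I will note that the expected KL is zero iff $p(A_t\mid s_t,s_{t+k})=p(A_t\mid s_t,s_{t+k'})$ for almost every triple in the support. Suppose $s^{(1)}_{t+k}$ and $s^{(2)}_{t+k}$ both reach a common $s_{t+k'}$ with positive probability. Then both triples are in the support, so equality would force $p(A_t\mid s_t,s^{(1)}_{t+k})=p(A_t\mid s_t,s_{t+k'})=p(A_t\mid s_t,s^{(2)}_{t+k})$, contradicting distinctness. The expected KL is therefore strictly positive.

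The main obstacle I expect is the strictness step in the continuous case. There "non-zero probability" must be read as positive-measure neighborhoods on which the posteriors differ, so that a single pair of points does not carry zero mass. In the discrete setting this is immediate. Otherwise I would state the condition on sets of positive measure and invoke continuity or an a.e. argument.
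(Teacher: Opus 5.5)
Your proposal is correct and follows the paper's proof essentially step for step: you expand $I(A_t;S_{t+k},S_{t+k'}\mid S_t)$ by the chain rule in both orders and drop the Markov term $I(A_t;S_{t+k'}\mid S_t,S_{t+k})=0$; you rewrite $I(A_t;S_{t+k}\mid S_t,S_{t+k'})$ as an expected KL using $p(A_t\mid S_t,S_{t+k},S_{t+k'})=p(A_t\mid S_t,S_{t+k})$; and you obtain strictness from two positive-probability triples with distinct action posteriors. The differences are cosmetic: you verify the conditional Markov property by an explicit factorization, which the paper takes as given, and you argue strictness by direct contradiction rather than through the paper's mixture identity $p(A_t\mid s_t,s_{t+k'})=\mathbb{E}_{S_{t+k}\mid s_t,s_{t+k'}}[p(A_t\mid s_t,S_{t+k})]$.
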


\textbf{Non-increasing mutual information.} BC suffers a catastrophic performance degradation when trained on more distant relabeled goals (see GCBC and GCFBC in \Cref{fig:geom_horizon_oracle_success}), which is unsurprising as it does not benefit from the additional goal-related information imparted by a goal-conditioned value function. We formalize this as a value-independent reduction in mutual information $I_{\mathrm{HR}}(A ; G \mid S)$ as the relabeling horizon grows [\Cref{prop:mi_decay}]. An extended discussion is included in \Cref{appendix:proof}.

This bound only guarantees non-increasing information. Strictly decreasing information occurs in environments and datasets where distinct trajectories eventually \textit{merge} at a shared future state, which can occur in two common scenarios. First, when learning from a sub-optimal policy, subsequent behavior can ``overwrite'' the effects of earlier actions. An agent might make a mistake and later correct it, reaching the same future state as a trajectory that took an optimal path. Second, merging occurs when there are multiple valid ways to accomplish the same goal. For example, in a sequential pick-and-place task over three objects, there are up to six valid orderings that all lead to the same final configuration. In both scenarios, longer task and relabeling horizons provide more opportunities for these action-distinguishable trajectories to merge. Thus, temporally distant relabeled goals provide progressively less information about the immediate action.


\subsection{From action-goal mutual information to policy sensitivity}
\label{sec:link_to_Jac}

While our theoretical results characterize goal dependence as a property of the full distribution, it is difficult to measure this quantity in a policy parameterized by a neural network. One local measure of sensitivity is the input gradient (or Jacobian, for vector-valued outputs). As introduced in \Cref{subsec:input_jacobian_def}, we use the Jacobian of the policy action with respect to the state and goal as a first-order measure of policy sensitivity. A large goal Jacobian indicates that small changes in the goal induce large changes in the action, whereas a small goal Jacobian indicates local invariance to the goal.

It is important to acknowledge that this quantity is not an estimator of $I(A;G\mid S)$. Rather, it allows us to locally probe the action-goal dependence whose global behavior is captured by mutual information. For fixed-variance Gaussian policies, the squared Frobenius norm of the input Jacobian corresponds to the trace of the Fisher information matrix (FIM), and directly measures how rapidly the policy distribution changes with respect to changes in goal information [Box \ref{box:fisher}].

To investigate whether the goal Jacobian norm reflects the predicted decrease in action-goal MI, we measure the $\ell_1$ goal Jacobian norm proportion for policies trained on the goal distributions described in \Cref{subsec:empirical_oracle}. The geometric [Figures \ref{fig:geom_input_jac} \& \ref{fig:geom_input_jac_all}] and truncated uniform distributions [Figure \ref{fig:uniform_input_jac_all}] vary the proportion of temporally distant goals, which we expect to be less informative about the immediate action. The random goal mixture more directly reduces action-goal MI by introducing unrelated target states [Figure \ref{fig:all_random_actor_goals}]. Across BC and RL objectives, as well as Gaussian and flow-matching policy classes, we observe a consistent, \textit{global} shift in sensitivity away from the goal towards the current state across matched state-goal distances as the training horizon increases. As predicted, we see that goal sensitivity collapses for the BC policies, which do not utilize additional value-related information [Section \ref{sec:mi_theory}].

\begin{figure}[t]
  \begin{center}
  \includegraphics[width=1.0\linewidth]{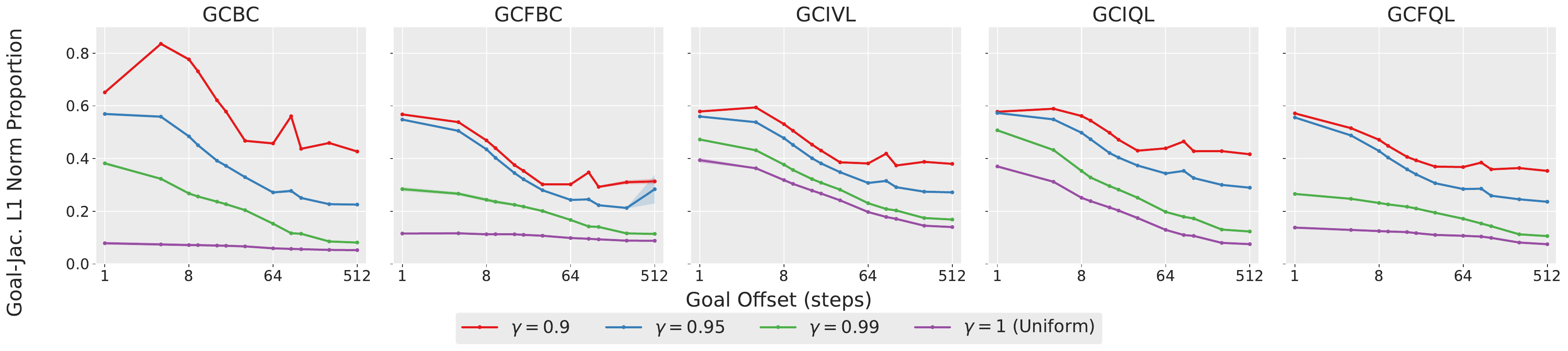}
  \caption{\textbf{Increasing training goal horizon \textit{globally} decreases policy goal sensitivity}. We measure the proportion of the input Jacobian norm corresponding to the goal at fixed state-goal offsets, plotted with $\log_2$ scaling in \texttt{cube-triple}. Results are averaged over the last three training checkpoints, three seeds, and 1024 randomly sampled state-goal pairs per horizon, with additional environments in Figure \ref{fig:geom_input_jac_all}.}
  \label{fig:geom_input_jac}
  \end{center}
\end{figure}

\begin{tcolorbox}[
  enhanced,
  breakable,
  float,
  floatplacement=t!,
  title=\textbf{Intuition}: Fisher Information Regularization,
  colframe=fav_color,
  colback=fav_color!8,
  coltitle=white,
  parbox=false,
  toprule=2pt,
  titlerule=1pt,
  leftrule=1pt,
  rightrule=1pt,
  bottomrule=1pt,
]
\label{box:fisher}

When $\pi$ is a Gaussian distribution with fixed variance $\sigma^2$, the state Jacobian $J_\pi(s)$ is the square root of the Fisher information matrix (FIM)
\begin{align*}
  \mathcal{I}(s)=\mathbb{E}_{\pi_\theta}\left[\nabla_s \log \pi_\theta(a \mid s, g) \cdot \nabla_s \log \pi_\theta(a \mid s, g)^T\right]
\end{align*}
with respect to the state. Consequently, the squared Frobenius norm of the state Jacobian, $\|J_s(s,g)\|_F^2$, when scaled by $\sigma^2$, corresponds exactly to the trace of the FIM and serves as a scalar measure of the local information the policy extracts from the state input $s$.

Viewed through this lens, our Sobolev bootstrapping objective [Equation \ref{eq:sobolev_objective}] acts as a dynamic form of \textbf{Fisher information regularization}. By penalizing deviations between the student's state Jacobian and that of the short-horizon teacher, we enforce the local information geometry of the long-horizon policy to be similar to that of the short-horizon policy.

This also provides a mechanical explanation for the findings in Section \ref{subsec:goal_cond} that regularizing the \textit{state} Jacobian also increases the norm of the \textit{goal} Jacobian. By regularizing local state dependence, Sobolev bootstrapping prevents the network from converging to a solution which exploits state-only optimization shortcuts and ignores goal conditioning inputs [Section \ref{sec:curse}].

\end{tcolorbox}

\section{Sobolev Bootstrapping}
The above analysis establishes a strong relationship between action-goal information and local policy sensitivity, where policies trained on longer-horizon goals exhibit reduced sensitivity to goal conditioning and a corresponding increase in sensitivity to the current state. This motivates us to investigate whether intervening on the policy input Jacobian can mitigate performance degradation. Direct $\ell_1$ and $\ell_2$ penalties on the state Jacobian yield modest gains on simpler tasks [\Cref{table:robustness}], providing evidence for a causal link between the policy sensitivity and downstream performance. However, these naive penalties fail to scale to more complex, multi-object environments.

We argue that a more principled Jacobian intervention should be \textit{context-dependent}, and propose \textbf{Sobolev bootstrapping}. Our approach uses a policy trained only on short-horizon goals as a first-order teacher for the full goal-conditioned ``student'' policy. Rather than globally suppressing state sensitivity via a uniform norm penalty, we regularize the long-horizon policy's state Jacobian towards that of the short-horizon policy, explicitly distilling the desirable sensitivity structure learned in a regime of high action-goal information. We additionally note that, for fixed-variance Gaussian policies, this objective can be interpreted as a form of Fisher information regularization [Box \ref{box:fisher}].

\subsection{Using short-horizon policies as teachers}
As the name suggests, Sobolev bootstrapping builds on a class of policy extraction algorithms which perform \textbf{policy bootstrapping} \citep{chane-sane_goal-conditioned_2021,zhou_flattening_2025}. These algorithms provide a natural foundation for our proposed method, since they already maintain a separate one-step policy trained only on short-horizon goals.

Due to space constraints, we provide only a brief intuition here and defer detailed descriptions of the value learning and policy extraction algorithms to \Cref{appendix:saw}.
Policy bootstrapping methods leverage the insight that, if an action $a$ is optimal for reaching a subgoal $s_g$ along the optimal path to goal $g$, then $a$ is also optimal for reaching $g$. Additionally, goal-conditioned policies are more likely to be accurate for short-horizon subgoals $s_g$, due to the decreasing ``signal-to-noise'' ratio of advantage estimates over the horizon \citep{park_hiql_2023}. Policy bootstrapping therefore uses $\pi(a\mid s, s_g)$ as a stronger, short-horizon supervisory signal than pure value-based policy extraction, by minimizing the Kullback-Leibler (KL) divergence between their action distributions
\begin{align*}
  \argmin_\theta D_\mathrm{KL}\left( \pi_\theta(a\mid s, g) \| \pi^\mathrm{sub}(a\mid s, s_g)\right).
\end{align*}

\textit{But how do we find good subgoals for a given goal}? Existing approaches such as Reinforcement Learning with Imagined Subgoals \citep[RIS]{chane-sane_goal-conditioned_2021} and Subgoal Advantage-Weighted Policy Bootstrapping \citep[SAW]{zhou_flattening_2025} differ primarily in how they answer this question. Similar to a bilevel hierarchical policy, RIS learns a generative model $\pi^h(s_g \mid s, g)$ to produce ``imagined'' subgoals. However, explicitly learning a hierarchical policy is known to suffer from compounding error and bottlenecks performance \citep{ahn_option-aware_2025}. On the other hand, SAW avoids these optimization difficulties by sampling only from in-trajectory waypoints $s_g \sim p^\mathcal D (s_{t+k} \mid s_t, g)$. It then re-weights the bootstrapping loss by $\exp(A(s,s_g, g))$, using a subgoal advantage estimate to prioritize high-quality waypoints [Equation \ref{eq:SAW_full}]. The advantage $A(s, s_g, g)$ is approximated by the learned value function as $V(s_g, g) - V(s,g)$, following \citet{park_hiql_2023}.

\subsection{Input Jacobian distillation}

While standard policy bootstrapping only distills the action distribution of the short-horizon teacher, our experiments in Section \ref{sec:curse} suggest that the input Jacobian of a short-horizon teacher may also provide useful supervisory signals. Drawing inspiration from Sobolev training \citep{czarnecki_sobolev_2017}, we augment the standard zeroth-order distillation with a first-order objective that matches the student's state Jacobian to that of the teacher policy $\pi^\mathrm{sub}(a\mid s, s_g)$:
\begin{align}
  \label{eq:sobolev_objective}
  \mathcal J_\mathrm{SB} = \mathbb E_{s,a,s_g, g} \left[\left \| \frac{\partial\pi_\theta( a\mid  s,  g)}{\partial  s} - \frac{\partial\pi^\mathrm{sub}( a\mid  s,  s_g)}{\partial s}\right \|_2^2\right].
\end{align}

Intuitively, the short-horizon teacher provides a prior not only for what the long-horizon policy should \textit{do}, but also what it should \textit{pay attention to} when deciding what to do. We match only the state Jacobian because the two policies receive different goal inputs: $\pi^\mathrm{sub}$ is conditioned on a nearby subgoal $s_g$ and $\pi$ on the final goal $g$. Interestingly, we find that regularizing the state Jacobian induces a corresponding change in the \textit{goal} Jacobian, which we discuss in more detail in Section \ref{subsec:goal_cond}.

\begin{table*}[t!]
  \caption{
  \textbf{Evaluating Sobolev bootstrapping on offline goal-conditioned RL tasks.}
  We compare our method's average (binary) success rate ($\%$) against the numbers reported in \citet{park_ogbench_2024} and \citet{zhou_flattening_2025} across the five test-time goals for each environment, averaged over the last three checkpoints and eight seeds $\pm$ standard deviations.
  Numbers within $5\%$ of the best value in the row are in \textbf{bold}.
  }
  \label{table:results}
  \centering
  \scalebox{0.66}
  {
  \begin{tabular}{llllllllll}
  \toprule
  \textbf{Environment} & \textbf{Dataset} & \textbf{GCBC} & \textbf{GCIVL} & \textbf{GCIQL} & \textbf{QRL} & \textbf{CRL} & \textbf{HIQL} & \textbf{SAW} & \textbf{S$^2$AW}\\
  \midrule
  \multirow[c]{3}{*}{\texttt{pointmaze}} & \texttt{pointmaze-medium-navigate-v0} & $9$ {\tiny $\pm 6$} & $63$ {\tiny $\pm 6$} & $53$ {\tiny $\pm 8$} & $82$ {\tiny $\pm 5$} & $29$ {\tiny $\pm 7$} & $79$ {\tiny $\pm 5$} & $\mathbf{97}$ {\tiny $\pm 2$} & $\mathbf{96}$ {\tiny $\pm 3$}\\
   & \texttt{pointmaze-large-navigate-v0} & $29$ {\tiny $\pm 6$} & $45$ {\tiny $\pm 5$} & $34$ {\tiny $\pm 3$} & $\mathbf{86}$ {\tiny $\pm 9$} & $39$ {\tiny $\pm 7$} & $58$ {\tiny $\pm 5$} & $\mathbf{85}$ {\tiny $\pm 10$} & $\mathbf{83}$ {\tiny $\pm 8$}\\
   & \texttt{pointmaze-giant-navigate-v0} & $1$ {\tiny $\pm 2$} & $0$ {\tiny $\pm 0$} & $0$ {\tiny $\pm 0$} & ${68}$ {\tiny $\pm 7$} & $27$ {\tiny $\pm 10$} & $46$ {\tiny $\pm 9$} & ${68}$ {\tiny $\pm 8$} & $\mathbf{73}$ {\tiny $\pm 9$}\\
  \cmidrule{1-10}
  \multirow[c]{3}{*}{\texttt{antmaze}} & \texttt{antmaze-medium-navigate-v0} & $29$ {\tiny $\pm 4$} & $72$ {\tiny $\pm 8$} & $71$ {\tiny $\pm 4$} & $88$ {\tiny $\pm 3$} & $\mathbf{95}$ {\tiny $\pm 1$} & $\mathbf{96}$ {\tiny $\pm 1$} & $\mathbf{97}$ {\tiny $\pm 1$} & $\mathbf{97}$ {\tiny $\pm 1$}\\
   & \texttt{antmaze-large-navigate-v0} & $24$ {\tiny $\pm 2$} & $16$ {\tiny $\pm 5$} & $34$ {\tiny $\pm 4$} & $75$ {\tiny $\pm 6$} & $83$ {\tiny $\pm 4$} & $\mathbf{91}$ {\tiny $\pm 2$} & $\mathbf{90}$ {\tiny $\pm 3$} & $\mathbf{93}$ {\tiny $\pm 2$}\\
   & \texttt{antmaze-giant-navigate-v0} & $0$ {\tiny $\pm 0$} & $0$ {\tiny $\pm 0$} & $0$ {\tiny $\pm 0$} & $14$ {\tiny $\pm 3$} & $16$ {\tiny $\pm 3$} & ${65}$ {\tiny $\pm 5$} & ${73}$ {\tiny $\pm 4$} & $\mathbf{87}$ {\tiny $\pm 3$}\\
  \cmidrule{1-10}
  \multirow[c]{3}{*}{\texttt{humanoidmaze}} & \texttt{humanoidmaze-medium-navigate-v0} & $8$ {\tiny $\pm 2$} & $24$ {\tiny $\pm 2$} & $27$ {\tiny $\pm 2$} & $21$ {\tiny $\pm 8$} & $60$ {\tiny $\pm 4$} & $\mathbf{89}$ {\tiny $\pm 2$} & $\mathbf{88}$ {\tiny $\pm 3$} & $\mathbf{90}$ {\tiny $\pm 2$}\\
   & \texttt{humanoidmaze-large-navigate-v0} & $1$ {\tiny $\pm 0$} & $2$ {\tiny $\pm 1$} & $2$ {\tiny $\pm 1$} & $5$ {\tiny $\pm 1$} & $24$ {\tiny $\pm 4$} & ${49}$ {\tiny $\pm 4$} & ${46}$ {\tiny $\pm 4$} & $\mathbf{57}$ {\tiny $\pm 5$}\\
   & \texttt{humanoidmaze-giant-navigate-v0} & $0$ {\tiny $\pm 0$} & $0$ {\tiny $\pm 0$} & $0$ {\tiny $\pm 0$} & $1$ {\tiny $\pm 0$} & $3$ {\tiny $\pm 2$} & ${12}$ {\tiny $\pm 4$} & ${35}$ {\tiny $\pm 4$} & $\mathbf{39}$ {\tiny $\pm 5$}\\
  \cmidrule{1-10}
   \multirow[c]{4}{*}{\texttt{cube}} & \texttt{cube-single-play-v0} & $6$ {\tiny $\pm 2$} & $53$ {\tiny $\pm 4$} & ${68}$ {\tiny $\pm 6$} & $5$ {\tiny $\pm 1$} & $19$ {\tiny $\pm 2$} & $44$ {\tiny $\pm 9$} & ${72}$ {\tiny $\pm 5$} & $\mathbf{86}$ {\tiny $\pm 5$}\\
   &  \texttt{cube-double-play-v0} & $1$ {\tiny $\pm 1$} & $36$ {\tiny $\pm 3$} & ${40}$ {\tiny $\pm 5$} & $1$ {\tiny $\pm 0$} & $10$ {\tiny $\pm 2$} & $6$ {\tiny $\pm 2$} & ${40}$ {\tiny $\pm 7$} & $\mathbf{57}$ {\tiny $\pm 6$}\\
   &  \texttt{cube-triple-play-v0} & $1$ {\tiny $\pm 1$} & $1$ {\tiny $\pm 0$} & $3$ {\tiny $\pm 1$} & $0$ {\tiny $\pm 0$} & ${4}$ {\tiny $\pm 1$} & $3$ {\tiny $\pm 1$} & ${4}$ {\tiny $\pm 2$} & $\mathbf{30}$ {\tiny $\pm 5$}
   \\
   &  \texttt{cube-quadruple-play-v0} & $0$ {\tiny $\pm 0$} & $0$ {\tiny $\pm 0$} & $0$ {\tiny $\pm 0$} & $0$ {\tiny $\pm 0$} & $0$ {\tiny $\pm 0$} & $0$ {\tiny $\pm 0$} & $0$ {\tiny $\pm 0$} & $\mathbf{3}$ {\tiny $\pm 2$}
   \\
\cmidrule{1-10}
  \multirow[c]{1}{*}{\texttt{scene}} & \texttt{scene-play-v0} & $5$ {\tiny $\pm 1$} & $42$ {\tiny $\pm 4$} & ${51}$ {\tiny $\pm 4$} & $5$ {\tiny $\pm 1$} & $19$ {\tiny $\pm 2$} & $38$ {\tiny $\pm 3$} & ${63}$ {\tiny $\pm 6$} & $\mathbf{74}$ {\tiny $\pm 6$}\\
\cmidrule{1-10}
   \multirow[c]{2}{*}{\texttt{puzzle}}
   & \texttt{puzzle-3x3-play-v0} & $2$ {\tiny $\pm 0$} & $6$ {\tiny $\pm 1$} & $\mathbf{95}$ {\tiny $\pm 1$} & $1$ {\tiny $\pm 0$} & $3$ {\tiny $\pm 1$} & $12$ {\tiny $\pm 2$} & ${9}$ {\tiny $\pm 3$} & ${15}$ {\tiny $\pm 4$}\\
   &  \texttt{puzzle-4x4-play-v0} & $0$ {\tiny $\pm 0$} & $13$ {\tiny $\pm 2$} & ${26}$ {\tiny $\pm 3$} & $0$ {\tiny $\pm 0$} & $0$ {\tiny $\pm 0$} & $7$ {\tiny $\pm 2$} & ${26}$ {\tiny $\pm 3$} & $\mathbf{41}$ {\tiny $\pm 3$}\\
  \bottomrule
  \end{tabular}
}
\end{table*}

\subsection{A practical policy extraction algorithm}
Sobolev bootstrapping requires access to differentiable \textit{subgoal}-conditioned ``teacher'' policies, making existing policy bootstrapping algorithms a natural foundation for our approach. For our primary results in Table \ref{table:results}, we instantiate Sobolev bootstrapping on top of SAW, which we denote as \textbf{S${^2}$AW} [Algorithm \ref{alg:s2aw}], due to its simplicity and strong performance. We modify the objective in Equation \ref{eq:sobolev_objective} to add the same subgoal advantage-weighted coefficient as in SAW [\Cref{appendix:saw}], resulting in the objective
\begin{align}
  \label{eq:sobolev_saw_obj}
  \mathcal J_\mathrm{SB}= \mathbb E_{s,a,s_g, g} \left[e^{A(s,s_g, g)}\left \| J_{\pi_\theta}(s) - J_{\pi^\mathrm{sub}}(s)\right \|_2^2\right].
\end{align}

While we focus our main analysis on S${^2}$AW, we also demonstrate the effectiveness of Sobolev bootstrapping on RIS, achieving substantial gains over the baseline algorithm [\Cref{appendix:ris_results}].

\section{Experiments}

We evaluate Sobolev bootstrapping on the OGBench suite \citep{park_ogbench_2024}, which spans long-horizon locomotion and combinatorial manipulation tasks. Our experiments address two central questions: (1) does Sobolev bootstrapping improve goal-reaching performance, and (2) does it reverse the horizon-induced shift in policy sensitivity identified in our previous experiments? Interestingly, although Sobolev bootstrapping only distills the \textit{state} Jacobian, we find that it also restores goal-conditioning sensitivity [Section \ref{subsec:goal_cond}].

\subsection{Sobolev bootstrapping boosts long-horizon performance}
Adding Sobolev bootstrapping to SAW (S$^2$AW) improves performance across most environments [Table \ref{table:results}], with the largest gains in combinatorial manipulation tasks such as the \texttt{cube} and \texttt{puzzle} environments. In particular, S$^2$AW improves from $40\%$ to $57\%$ on \texttt{cube-double}, $4\%$ to $30\%$ on \texttt{cube-triple}, and $26\%$ to $41\%$ on \texttt{puzzle-4x4}. We also observe substantial improvements in long-horizon locomotion, although these results may be partially attributed to architectural differences [Appendix \ref{appendix:arch_ablations}]. Additional horizon reduction baselines are provided in \Cref{appendix:additional_baselines}.

The improvements in the discrete \texttt{puzzle} environments demonstrate that first-order distillation can be practically useful even when the true decision boundary is non-smooth, since the short-horizon teacher policy provides a (piecewise) smooth, differentiable approximation of these hard boundaries.

\subsection{Effects on state sensitivity}
\label{subsec:sobolev_sensitivity}

\begin{figure}[t!]
  \begin{center}
  \includegraphics[width=1.0\linewidth]{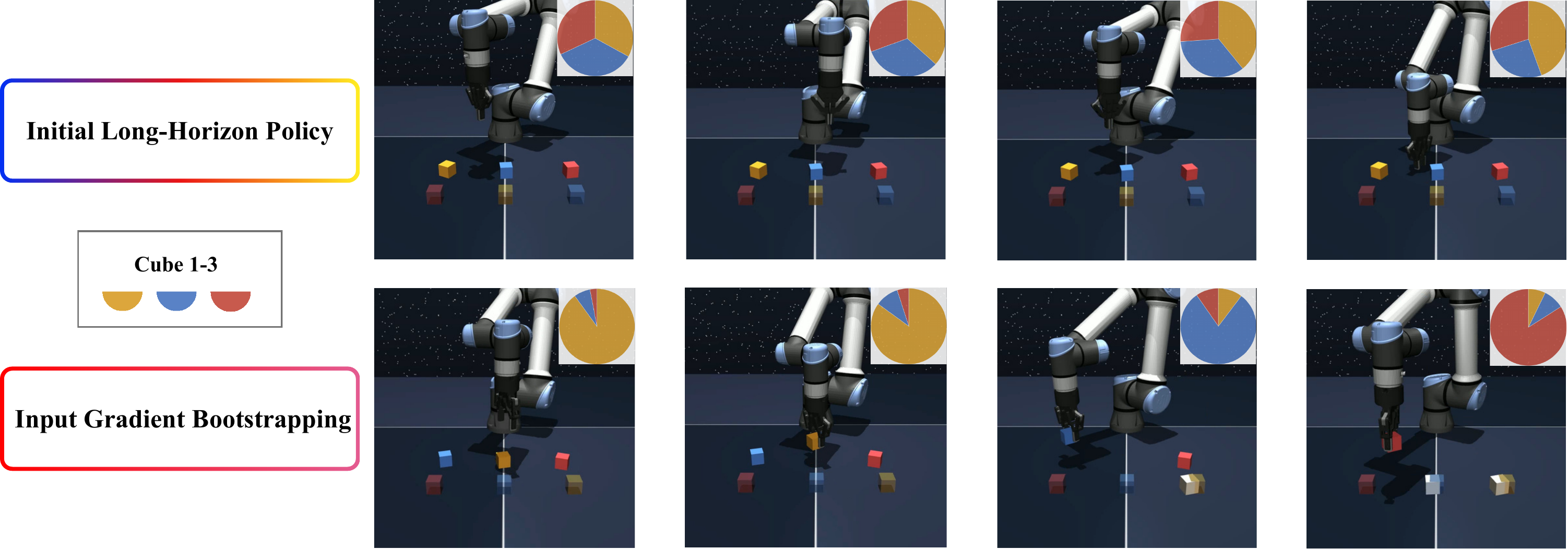}
  \caption{\textbf{Sobolev bootstrapping exhibits more interpretable sensitivity structure}. We plot the proportion of the total input Jacobian norm corresponding to each cube during a triple pick-and-place \textit{on-policy} rollout for SAW without (top) and with (bottom) Sobolev bootstrapping. For fairness, we also analyze a scripted optimal trajectory in \Cref{fig:saw_vs_s2aw_saliency_map}.}
  \label{fig:pie_chart}
  \end{center}
\end{figure}

To examine how Sobolev bootstrapping qualitatively changes policy sensitivity to state information, we decompose the state Jacobian norm by object over the course of a \texttt{cube-triple} rollout [Figure \ref{fig:pie_chart}], allowing us to visualize where the policy's ``attention'' is allocated. Vanilla SAW exhibits diffuse sensitivity across all cubes and immediately fails at the first task stage, whereas S$^2$AW dynamically allocates sensitivity to the focal cube at each stage and successfully completes the full triple pick-and-place task. To control for differences in policy competence, we repeat the analysis on an independent, scripted optimal trajectory in \Cref{fig:saw_vs_s2aw_saliency_map} and observe the same pattern.

We hypothesize that the observed shift in sensitivity away from goal information and towards state inputs may also result in policies that overfit to task-irrelevant ``distractors.'' To test this, we evaluate policies trained in multi-cube environments on single pick-and-place tasks. The first evaluation task in \texttt{cube-double} and \texttt{triple} already tests this capability, and we add a similar task to \texttt{cube-quadruple}, which we refer to as Task 6. While naive $\ell_1$ and $\ell_2$ Jacobian norm penalties produce small improvements, context-dependent Sobolev bootstrapping significantly improves performance even on \texttt{cube-triple} and \texttt{quadruple} [Table \ref{table:robustness}].

\begin{figure}[t]
  \begin{center}
  \includegraphics[width=1.0\linewidth]{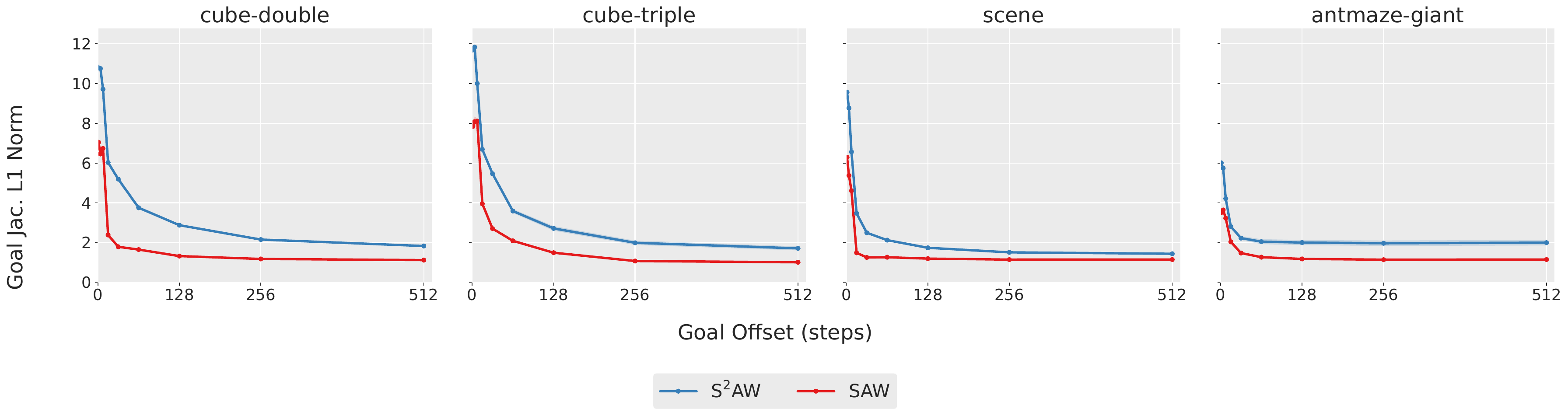}
  \caption{\textbf{Sobolev bootstrapping recovers policy goal sensitivity}. We compare the \textit{absolute} goal Jacobian norm between S$\mathbf{^2}$AW and SAW, averaging over batches of goals sampled at fixed intervals from the current state up to the maximum horizon. Results are averaged across four seeds and 1024 state-goal pairs for each horizon, with shaded standard deviations.}
  \label{fig:saw_eval_input_grad}
  \end{center}
\end{figure}

\begin{table}[t]
  \caption{
  \textbf{Sobolev bootstrapping improves single pick-and-place performance in the presence of distractor objects.}
  We evaluate policies trained in multi-cube environments on single pick-and-place tasks. While naive $\ell_1$ or $\ell_2$ Jacobian regularization [Appendix \ref{appendix:l1_l2_jacobian_reg}] yields modest improvements on simpler tasks, Sobolev bootstrapping matches or outperforms all baselines. Other settings are identical to \Cref{table:results}.
  }
  \label{table:robustness}
  \centering
  \scalebox{0.7}
  {
  \begin{tabular}{lllllll}
  \toprule
  \textbf{Environment} & \textbf{GCIQL} & \textbf{GCIQL$_{\ell_1}$} & \textbf{SAW} & \textbf{SAW$_{\ell_1}$} & \textbf{SAW$_{\ell_2}$} & \textbf{S$^2$AW}\\
  \midrule
  \texttt{cube-single} & $68$ {\tiny $\pm 6$} & $74$ {\tiny $\pm 4$} & $76$ {\tiny $\pm 8$} & $\mathbf{85}$ {\tiny $\pm 6$} & $\mathbf{85}$ {\tiny $\pm 4$} & $\mathbf{86}$ {\tiny $\pm 5$} \\
  \texttt{cube-double-task1} & $74$ {\tiny $\pm 8$} & $76$ {\tiny $\pm 8$} & $64$ {\tiny $\pm 9$} & $\mathbf{81}$ {\tiny $\pm 9$} & ${65}$ {\tiny $\pm 14$} & $\mathbf{78}$ {\tiny $\pm 12$}\\
  \texttt{cube-triple-task1} & $13$ {\tiny $\pm 3$} & $16$ {\tiny $\pm 10$} & $18$ {\tiny $\pm 11$} & $24$ {\tiny $\pm 9$} & $17$ {\tiny $\pm 12$} & $\mathbf{74}$ {\tiny $\pm 9$}\\
  \texttt{cube-quadruple-task6} & $0$ {\tiny $\pm 1$} & $0$ {\tiny $\pm 1$} & $2$ {\tiny $\pm 2$} & $2$ {\tiny $\pm 2$} & $1$ {\tiny $\pm 1$} & $\mathbf{35}$ {\tiny $\pm 14$}\\
  \bottomrule
  \end{tabular}
}
\end{table}

\subsection{Effects on goal conditioning}
\label{subsec:goal_cond}

Although Sobolev bootstrapping explicitly matches only the \textit{state} Jacobian, we find that it induces an increase in the policy's \textit{absolute} goal Jacobian norm [Figure \ref{fig:saw_eval_input_grad}]. This suggests that the policy has converged to a different, more goal-sensitive solution instead of simply smoothing out state sensitivity. We note that a decrease in sensitivity to distant goals is unsurprising---for instance, perturbing the target placement in a pick-and-place task should not affect the policy before grasping the cube. However, standard long-horizon policies exhibit \textit{global} insensitivity across all goal distances, including \textit{nearby} goals [Figure \ref{fig:saw_eval_input_grad}], and it is this pathology that Sobolev bootstrapping addresses.

\section{Discussion}
\label{sec:discussion}
In this work, we identify an \textit{informational} curse of horizon in goal-conditioned policy learning. By decoupling the training goal horizon from the subgoal distance presented at test time, we show that policies trained on distant relabeled goals can become worse at reaching nearby goals. We explain this effect through the decay of action-goal mutual information under hindsight relabeling: as training goals become more distant, they provide progressively less information about the immediate action. This is accompanied by a global shift in sensitivity away from the goal and toward the current state in neural network policies, across all goal distances. Although input Jacobians are only a local probe of this dependence, explicitly weakening action-goal information produces a corresponding collapse in goal sensitivity. We then propose Sobolev bootstrapping, which exploits this observation by distilling the sensitivity structure of short-horizon policies into long-horizon policies and achieves large gains in combinatorial manipulation tasks. Overall, our results suggest that the goal-relabeling horizon should be treated as an important design choice, rather than a mere consequence of task horizon.

Our work does have several limitations. The optimal goal-horizon distribution depends strongly on the policy objective, prompting further study on how different policy extraction methods utilize goal information. Additionally, input Jacobians capture only local sensitivity and are not direct estimators of mutual information, and the Sobolev bootstrapping algorithm derived from those insights introduces significant additional computation. These limitations motivate future work on information-aware relabeling and policy-learning objectives that preserve goal dependence without explicit Jacobian distillation.

\vfill
\pagebreak

\section*{Acknowledgments}
This work was supported by NIH R01NS121097 awarded to JCK.

\bibliography{iclr2027_conference}
\bibliographystyle{iclr2027_conference}

\vfill
\pagebreak

\appendix
\section{Related Work}

\subsection{Offline goal-conditioned policy learning}

The objective of offline goal-conditioned policy learning is to learn a policy which can reach any state from any other state from a fixed dataset of offline trajectories. This can be accomplished with goal-conditioned behavioral cloning (GCBC) that relabels state-action pairs with future trajectory states as goals; or goal-conditioned reinforcement learning (GCRL), which makes use of learned goal-conditioned value functions. It combines the challenges of conservative offline RL with the sparse rewards and recursive structure of goal-reaching tasks \citep{kaelbling_learning_1993}. Standard algorithms typically use behavior-regularized actor-critic methods or learn an \textit{implicit} value function with in-sample value iteration \citep{kostrikov_offline_2021}, then extract a policy using weighted behavioral cloning \citep{peng_advantage-weighted_2019} or reparameterized policy gradient with a behavioral cloning penalty \citep{fujimoto_minimalist_2021}. The recursive structure of goal-conditioned RL also admits a variety of non-traditional value learning algorithms based on successor measures \citep{eysenbach_contrastive_2022} and quasimetric structure \citep{wang_optimal_2023,park_transitive_2026}.

\subsection{The curse of horizon}
\label{subsec:curse}
The challenges introduced by horizon in sequential decision-making tasks are well known. In addition to the error bounds of \citet{ross_efficient_2010} in imitation learning, reinforcement learning introduces a whole host of additional horizon-related difficulties. The effective task horizon, set by the discount factor $\gamma$, is known to control the complexity of the policy class \citep{jiang_dependence_2015,jiang_structural_2016}, and lower discount factors can provide pessimistic \citep{hu_role_2022} and regularizing effects \citep{amit_discount_2020}. However, artificially decreasing the discount factor is not feasible when the true task horizon is long and reward signals are sparse, as in goal-reaching tasks.

Increasing the discount factor in off-policy RL leads to a host of additional problems, typically attributed to bias accumulation in value learning \citep{park_horizon_2025} and difficulty in advantage estimation for primitive actions in policy extraction. To mitigate these difficulties, recent approaches have employed various techniques to reduce the effective horizon relative to decision-making frequency, both for value learning \citep{ahn_option-aware_2025,zheng_scaling_2025} and policy extraction \citep{li_reinforcement_2025,li_decoupled_2026}. Our work specifically focuses on the effects of horizon on \textit{goal-conditioned} policy learning, where we identify a particular \textit{informational} curse of horizon: as the horizon increases, the informative signal between the policy and the goal diminishes, causing the policy to overfit to spurious state features at the expense of goal conditioning.

\subsection{Steerable robot policies}
Effective goal conditioning is increasingly critical for modern robotic systems, including subgoal-conditioned policies \citep{zhou2025act2goal,intelligence2026pi07steerablegeneralistrobotic} and inverse-kinematic-style World Action Models \citep{du2023learninguniversalpoliciestextguided,li2026causalworldmodelingrobot}. Recent works demonstrate the significance of effectively utilizing goal information: \citet{mishra2026understandingmitigatingvideoactiongeneralization} links video-action model (VAM) performance to a model's reliance on the generated future frames, as measured through cross-attention ratios. \citet{korniak2026stepstimelearningrepresentations} shows that action chunking increases the mutual information between the state-action pair and the goal, which facilitates critic learning in goal-conditioned RL.


\newpage

\section{Extended Proofs}
\label{appendix:proof}
Below, we use uppercase letters (e.g., $S_t$) to denote random variables, and lower-case letters (e.g., $s_t$) to denote their instantiations.

\begin{lemma}
Let $P$ be a base probability distribution and let $f(x)$ be a bounded scalar function such that $f(x) \in [a, b]$ for all $x$ in the support of $P$. Define the exponentially reweighted distribution $P_f$ as $P_f(x) = \frac{P(x) e^{f(x)}}{Z}$, where $Z = \mathbb{E}_{X \sim P}[e^{f(X)}]$ is the partition function. Then, the Kullback-Leibler divergence is bounded by:
$$
D_{\mathrm{KL}}(P_f \Vert P) \leq \frac{(b - a)^2}{8}
$$
\end{lemma}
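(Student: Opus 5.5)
The plan is to write the divergence in closed form and reduce the claim to Hoeffding's lemma applied to the cumulant generating function of $f$. By definition of $P_f$,
\begin{align*}
D_{\mathrm{KL}}(P_f \Vert P) = \mathbb{E}_{P_f}\left[\log \frac{e^{f(X)}}{Z}\right] = \mathbb{E}_{P_f}[f(X)] - \log Z .
\end{align*}
Introduce $\psi(\lambda) = \log \mathbb{E}_{P}[e^{\lambda f(X)}]$ for $\lambda \in [0,1]$, so that $\log Z = \psi(1)$ and $\psi(0)=0$. Writing $P_{\lambda} \propto P e^{\lambda f}$ for the tilted family, the standard derivatives are $\psi'(\lambda) = \mathbb{E}_{P_\lambda}[f]$ and $\psi''(\lambda) = \mathrm{Var}_{P_\lambda}(f)$. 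Boundedness of $f$ makes these interchanges of derivative and expectation legitimate, and it also keeps $Z$ finite and positive. In this notation,
\begin{align*}
D_{\mathrm{KL}}(P_f \Vert P) = \psi'(1) - \psi(1) + \psi(0).
\end{align*}

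The key step is a Taylor expansion of $\psi$ around $\lambda=1$, evaluated at $0$:
\begin{align*}
\psi(0) = \psi(1) - \psi'(1) + \tfrac{1}{2}\psi''(\xi)
\end{align*}
for some $\xi \in (0,1)$, or equivalently in integral form $\int_0^1 \lambda\, \psi''(\lambda)\, d\lambda$. This gives $D_{\mathrm{KL}}(P_f \Vert P) = \tfrac12 \psi''(\xi)$.

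Next I would bound the variance. Each $P_\lambda$ has the same support as $P$, so $f \in [a,b]$ holds $P_\lambda$-almost surely. Popoviciu's inequality then gives $\mathrm{Var}_{P_\lambda}(f) \le (b-a)^2/4$. Hence
\begin{align*}
D_{\mathrm{KL}}(P_f \Vert P) \le \tfrac12 \cdot \tfrac{(b-a)^2}{4} = \tfrac{(b-a)^2}{8},
\end{align*}
which is the claimed bound.

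The only delicate point is getting the constant $1/8$ rather than a cruder one. The one-sided Taylor remainder at $\lambda=1$ gives exactly the factor $\tfrac12$ needed, whereas naive bounds on $\mathbb{E}_{P_f} f - \log Z$ lose a factor of $2$. An alternative route yields the same constant: combine $\log Z \ge \mathbb{E}_P f$ (Jensen) with Hoeffding's lemma applied to $P_f$. Specifically, $\log \mathbb{E}_{P_f}[e^{-(f - \mathbb{E}_{P_f} f)}] \le (b-a)^2/8$, and the left side equals $\mathbb{E}_{P_f} f - \log Z + \log \mathbb{E}_P[1] = D_{\mathrm{KL}}(P_f\Vert P)$. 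I would present the Taylor argument and mention this identity as a check.
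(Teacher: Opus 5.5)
Your proof is correct and follows essentially the paper's route: express $D_{\mathrm{KL}}(P_f\Vert P)$ through $\psi(\lambda)=\log\mathbb{E}_P[e^{\lambda f(X)}]$, identify it with the Taylor remainder $\tfrac12\psi''(\xi)$, and bound $\psi''(\xi)=\mathrm{Var}_{P_\xi}(f)\le (b-a)^2/4$. The only difference is that the paper cites Hoeffding's lemma for the last step, while you invoke Popoviciu's inequality, which is the ingredient inside its standard proof; your explicit choice to expand about $\lambda=1$ is also the correct one for this direction of the divergence, since expanding about $0$ yields $D_{\mathrm{KL}}(P\Vert P_f)$ instead.
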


\begin{proof}
The KL divergence between a base distribution $P$ and its exponentially reweighted version $P_f \propto P e^f$ can be expressed in terms of the log-partition function $\psi(\lambda) = \log \mathbb{E}_{X \sim P}[e^{\lambda f(X)}]$, where $\lambda \in [0, 1]$ is a scalar parameter that defines the intermediate distributions between the base distribution and the exponentially reweighted target. By taking a second-order Taylor expansion of $\psi(\lambda)$, we can rewrite this KL divergence as the remainder $\frac{1}{2} \psi''(c)$ for some $c \in (0,1)$. Because $f(X)$ is bounded in $[a, b]$, we can apply Hoeffding's Lemma (see Lemma 2.2 in \cite{boucheron-hal-00942704}) to bound this second derivative, yielding a maximum KL divergence of $\frac{(b - a)^2}{8}$.
\end{proof}

\begin{theorem}[Divergence Bound from the Goal-Agnostic Prior]
Let $\pi^{*}(a \mid s, g) \propto p_{\mathrm{HR}}(a \mid s, g) e^{Q(s, a, g) / \alpha}$ be the target distribution induced by the KL-regularized policy update, where $\alpha$ is a temperature parameter and $p_{\mathrm{HR}}$ is the distribution of tuples $(S, A, G)=\left(S_t, A_t, S_{t+K}\right)$ induced by the dataset trajectory distribution and a hindsight relabeling rule $K \sim q_{\mathrm{HR}}$. Let $\beta(a \mid s)$ be the goal-agnostic behavior policy induced by the dataset, and let the critic value span be $\Delta_Q(s, g) = \sup_{a} Q(s, a, g) - \inf_{a} Q(s, a, g)$ over the support of $p_{\mathrm{HR}}$. Then, the expected squared total variation distance between the two policies is bounded by:
$$
\mathbb{E}_{S, G}\left[D_{\mathrm{TV}}\left(\pi^{*}(\cdot \mid S, G), \beta(\cdot \mid S)\right)^2\right] \leq \frac{\mathbb{E}_{S, G}\left[\Delta_Q(S, G)^2\right]}{8 \alpha^2} + I_{\mathrm{HR}}(A ; G \mid S)
$$
\end{theorem}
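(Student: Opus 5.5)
The natural route is to insert the intermediate distribution $p_{\mathrm{HR}}(\cdot \mid s,g)$ between $\pi^*(\cdot\mid s,g)$ and $\beta(\cdot\mid s)$ and use the triangle inequality for total variation:
\[
D_{\mathrm{TV}}\left(\pi^*, \beta\right) \le D_{\mathrm{TV}}\left(\pi^*, p_{\mathrm{HR}}(\cdot\mid s,g)\right) + D_{\mathrm{TV}}\left(p_{\mathrm{HR}}(\cdot\mid s,g), \beta(\cdot\mid s)\right).
\]
The first term isolates the value-based reweighting. The second is purely informational. I will bound each by a KL divergence via Pinsker's inequality, $D_{\mathrm{TV}}^2 \le \tfrac12 D_{\mathrm{KL}}$.

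For the first term, I apply the preceding lemma with $P = p_{\mathrm{HR}}(\cdot\mid s,g)$ and $f(a) = Q(s,a,g)/\alpha$. On the support of $P$, $f$ lies in an interval of width $\Delta_Q(s,g)/\alpha$, so
\[
D_{\mathrm{KL}}(\pi^*\Vert p_{\mathrm{HR}}) \le \frac{\Delta_Q(s,g)^2}{8\alpha^2}.
\]
For the second term, I identify $\beta(a\mid s)$ with the goal-marginal $p_{\mathrm{HR}}(a\mid s) = \mathbb{E}_{G\mid s}[p_{\mathrm{HR}}(a\mid s,G)]$. Relabeling does not change the joint law of $(S_t,A_t)$, so this marginal is exactly the dataset behavior policy. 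Then
\[
\mathbb{E}_{S,G}\left[D_{\mathrm{KL}}\left(p_{\mathrm{HR}}(\cdot\mid S,G)\Vert \beta(\cdot\mid S)\right)\right] = I_{\mathrm{HR}}(A;G\mid S)
\]
by the definition of conditional mutual information.

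To combine the two, I square the triangle inequality using $(x+y)^2 \le 2x^2 + 2y^2$. Applying Pinsker to each term gives
\[
D_{\mathrm{TV}}(\pi^*,\beta)^2 \le 2\cdot\tfrac12 D_{\mathrm{KL}}(\pi^*\Vert p_{\mathrm{HR}}) + 2\cdot\tfrac12 D_{\mathrm{KL}}(p_{\mathrm{HR}}\Vert\beta),
\]
so the factors of $2$ and $\tfrac12$ cancel. Taking $\mathbb{E}_{S,G}$ under $p_{\mathrm{HR}}$ then yields exactly $\mathbb{E}[\Delta_Q^2]/(8\alpha^2) + I_{\mathrm{HR}}(A;G\mid S)$.

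I expect the only delicate point to be bookkeeping rather than any hard estimate. First, the expectations over $(S,G)$ must be taken under the same relabeled distribution $p_{\mathrm{HR}}$ that defines the mutual information. Second, the KL arguments must be in the orientation the lemma and the MI identity require: $\pi^*$ relative to $p_{\mathrm{HR}}$, and the conditional relative to the marginal. Third, $\Delta_Q$ must be taken over the support of $p_{\mathrm{HR}}(\cdot\mid s,g)$ so the lemma applies; the statement's definition of $\Delta_Q$ already covers this. Measure-theoretic regularity (existence of densities, finiteness of $Q$) I will assume implicitly.
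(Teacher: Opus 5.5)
Your proposal is correct and follows the paper's proof step for step: the triangle inequality through $p_{\mathrm{HR}}(\cdot\mid s,g)$, squaring via $(x+y)^2\le 2x^2+2y^2$, Pinsker on both terms, the Hoeffding-type lemma with $f=Q/\alpha$, and the identification $\beta(a\mid s)=p_{\mathrm{HR}}(a\mid s)$ that turns the expected KL into $I_{\mathrm{HR}}(A;G\mid S)$. You justify that identification by saying relabeling leaves the joint law of $(S_t,A_t)$ unchanged, which is the exact condition needed and slightly sharper than the paper's remark about the marginal state distribution.
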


\begin{proof}
We bound the total variation distance (TVD) between the target policy $\pi^*(\cdot \mid s, g)$ and the behavioral policy $\beta(\cdot \mid s)$ by introducing the hindsight-relabeled distribution $p_{\mathrm{HR}}(\cdot \mid s, g)$ as an intermediate distribution. We denote $\pi^*(\cdot \mid s, g)$ as $\pi^*$ below for brevity. The same notation applies to $\beta(\cdot \mid s)$ and $p_{\mathrm{HR}}(\cdot \mid s, g)$.

Because TVD is a metric, it satisfies the triangle inequality. For any given state $s$ and goal $g$:
\begin{equation*}
D_{\mathrm{TV}}(\pi^*, \beta) \leq D_{\mathrm{TV}}(\pi^*, p_{\mathrm{HR}}) + D_{\mathrm{TV}}(p_{\mathrm{HR}}, \beta).
\end{equation*}

After squaring both sides and applying the inequality $(x + y)^2 \leq 2x^2 + 2y^2$, we have
\begin{equation*}
D_{\mathrm{TV}}(\pi^*, \beta)^2 \leq 2 D_{\mathrm{TV}}(\pi^*, p_{\mathrm{HR}})^2 + 2 D_{\mathrm{TV}}(p_{\mathrm{HR}}, \beta)^2.
\end{equation*}

Pinsker's inequality, $D_{\mathrm{TV}}(P, Q)^2 \leq \frac{1}{2} D_{\mathrm{KL}}(P \Vert Q)$, applied to $2 D_{\mathrm{TV}}(\pi^*, p_{\mathrm{HR}})^2$, gives us
\begin{equation*}
2 D_{\mathrm{TV}}(\pi^*, p_{\mathrm{HR}})^2 \leq D_{\mathrm{KL}}(\pi^* \Vert p_{\mathrm{HR}}).
\end{equation*}

Because $\pi^*$ is an exponentially reweighted version of $p_{\mathrm{HR}}$, we can apply Lemma 1 to $D_{\mathrm{KL}}(\pi^* \Vert p_{\mathrm{HR}})$ with $f(a) = Q(s, a, g) / \alpha$, yielding
\begin{equation*}
D_{\mathrm{KL}}(\pi^* \Vert p_{\mathrm{HR}}) \leq \frac{\Delta_Q(s, g)^2}{8 \alpha^2}.
\end{equation*}

Now we have shown
\begin{equation} \label{eq:intermediate_tvd}
D_{\mathrm{TV}}(\pi^*, \beta)^2 \leq \frac{\Delta_Q(s, g)^2}{8 \alpha^2} + 2 D_{\mathrm{TV}}(p_{\mathrm{HR}}, \beta)^2.
\end{equation}

Next, we apply Pinsker's inequality again to $2 D_{\mathrm{TV}}(p_{\mathrm{HR}}, \beta)^2$, leading to
\begin{equation*}
2 D_{\mathrm{TV}}(p_{\mathrm{HR}}, \beta)^2 \leq D_{\mathrm{KL}}(p_{\mathrm{HR}}(\cdot \mid s, g) \Vert \beta(\cdot \mid s)).
\end{equation*}

Note that hindsight relabeling does not change the marginal state distribution, implying $\beta(a \mid s) = p_{\mathrm{HR}}(a \mid s)$. Then taking the expectation of this KL divergence over $S$ and $G$ yields the conditional mutual information between actions and goals given the state:
\begin{equation} \label{eq:mi_expectation}
\mathbb{E}_{S, G}\left[ D_{\mathrm{KL}}(p_{\mathrm{HR}}(\cdot \mid S, G) \Vert p_{\mathrm{HR}}(\cdot \mid S)) \right] = I_{\mathrm{HR}}(A ; G \mid S)
\end{equation}

Taking the expectation of \eqref{eq:intermediate_tvd} over S and G and combining it with \eqref{eq:mi_expectation} concludes the proof.
\end{proof}

\begin{proposition}[Monotonic Decay of Mutual Information]
\label{prop:mi_decay_appendix}
Let dataset trajectories be generated by a Markovian behavior policy and transition kernel. For any time horizon (relabeling offset) $k' > k > 0$, given the current state $S_t$, the sequence $A_t \rightarrow S_{t+k} \rightarrow S_{t+k'}$ forms a conditional Markov chain. The mutual information between the action and future states monotonically decreases over time:
\begin{align*}
  I\left(A_t ; S_{t+k} \mid S_t\right) &- I\left(A_t ; S_{t+k'} \mid S_t\right) \\
  &= \mathbb{E}_{S_t, S_{t+k}, S_{t+k'}}\left[ D_{\mathrm{KL}}\left(p(A_t \mid S_t, S_{t+k}) \Vert{} p(A_t \mid S_t, S_{t+k'})\right) \right] \geq 0
\end{align*}

This inequality is strict when there exist at least two intermediate states, $s_{t+k}^{(1)}$ and $s_{t+k}^{(2)}$, that share a common future successor state $s_{t+k'}$ with non-zero probability, but yield distinct intermediate action posteriors $p(A_t \mid s_t, s_{t+k}^{(1)}) \neq p(A_t \mid s_t, s_{t+k}^{(2)})$.
\end{proposition}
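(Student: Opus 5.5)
The plan is to derive the identity by combining the conditional Markov property with the chain rule for mutual information, and then to obtain the strictness condition by checking when the expected KL divergence can vanish.

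\textbf{Step 1: the conditional Markov chain.} First I will verify that $A_t \rightarrow S_{t+k} \rightarrow S_{t+k'}$ is Markov given $S_t$, i.e.\ $p(s_{t+k'} \mid s_t, a_t, s_{t+k}) = p(s_{t+k'} \mid s_{t+k})$. Since the behavior policy and transition kernel are Markovian, the trajectory from time $t+k$ onward is generated by $S_{t+k}$ alone. Hence $S_{t+k'}$ is conditionally independent of $(S_t, A_t)$ given $S_{t+k}$, which gives $I(A_t; S_{t+k'} \mid S_t, S_{t+k}) = 0$.

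\textbf{Step 2: the identity.} I will expand $I(A_t; S_{t+k}, S_{t+k'} \mid S_t)$ by the chain rule in two orders:
\begin{align*}
I(A_t; S_{t+k}, S_{t+k'} \mid S_t) &= I(A_t; S_{t+k} \mid S_t) + I(A_t; S_{t+k'} \mid S_t, S_{t+k}) \\
&= I(A_t; S_{t+k'} \mid S_t) + I(A_t; S_{t+k} \mid S_t, S_{t+k'}).
\end{align*}
The second term of the first line is zero by Step 1. The difference in the statement therefore equals $I(A_t; S_{t+k} \mid S_t, S_{t+k'})$. Writing this term as an expected KL divergence gives
$$\mathbb{E}\left[D_{\mathrm{KL}}\left(p(A_t \mid S_t, S_{t+k}, S_{t+k'}) \Vert p(A_t \mid S_t, S_{t+k'})\right)\right].$$
By the Markov property, $p(A_t \mid S_t, S_{t+k}, S_{t+k'}) = p(A_t \mid S_t, S_{t+k})$, which is exactly the displayed expression. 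Nonnegativity follows from Gibbs' inequality.

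\textbf{Step 3: strictness (the main obstacle).} The delicate point is making the "non-zero probability" clause rigorous; I would work in the discrete case, or with densities and positive-measure sets. Fix $s_t$ and $s_{t+k'}$ with $p(s_{t+k}^{(i)} \mid s_t, s_{t+k'}) > 0$ for $i = 1, 2$. By the Markov property,
$$p(A_t \mid s_t, s_{t+k'}) = \sum_{s_{t+k}} p(s_{t+k} \mid s_t, s_{t+k'})\, p(A_t \mid s_t, s_{t+k}),$$
a mixture of the intermediate posteriors. Suppose the expected KL were zero. Then, almost surely, every intermediate posterior with positive weight would equal this mixture. In particular $p(A_t \mid s_t, s_{t+k}^{(1)}) = p(A_t \mid s_t, s_{t+k'}) = p(A_t \mid s_t, s_{t+k}^{(2)})$, contradicting the assumption that these posteriors differ. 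Hence the KL term is positive on a set of positive probability, so the expectation is strictly positive. In the continuous case I would additionally need the condition to hold on a positive-measure set, and I would note this as an implicit assumption.
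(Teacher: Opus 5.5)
Your proposal is correct and follows the paper's route: you expand $I(A_t; S_{t+k}, S_{t+k'} \mid S_t)$ by the chain rule in both orders and drop the Markov term, rewrite $I(A_t; S_{t+k} \mid S_t, S_{t+k'})$ as an expected KL using $p(A_t \mid S_t, S_{t+k}, S_{t+k'}) = p(A_t \mid S_t, S_{t+k})$, and use the mixture form of $p(A_t \mid s_t, s_{t+k'})$ for strictness. Your Step 3 is tighter than the paper's, which states the merging-paths and different-actions conditions only as what is \emph{needed} for a nonzero KL, whereas your contradiction argument (two differing positive-weight components cannot both equal the mixture) proves the sufficiency the proposition actually claims.
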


\begin{proof}
Expand $I(A_t ; S_{t+k}, S_{t+k'} \mid S_t)$ with mutual information chain rule in both ways and leverage $I\left(A_t ; S_{t+k^{\prime}} \mid S_t, S_{t+k}\right) = 0$ implied by the Markov chain, we have
\begin{equation*}
I\left(A_t ; S_{t+k} \mid S_t\right) - I\left(A_t ; S_{t+k^{\prime}} \mid S_t\right) = I\left(A_t ; S_{t+k} \mid S_t, S_{t+k^{\prime}}\right) \geq 0.
\end{equation*}

Next, we rewrite $I\left(A_t ; S_{t+k} \mid S_t, S_{t+k^{\prime}}\right)$ as an expected KL divergence
\begin{equation*}
\begin{split}
& I(A_t ; S_{t+k} \mid S_t, S_{t+k'}) \\
& \quad = \mathbb{E}_{S_t, S_{t+k'}} \left[ D_{\mathrm{KL}}\left(p(A_t, S_{t+k} \mid S_t, S_{t+k'}) \Vert p(A_t \mid S_t, S_{t+k'})p(S_{t+k} \mid S_t, S_{t+k'})\right) \right].
\end{split}
\end{equation*}

Since $p(A_t, S_{t+k} \mid S_t, S_{t+k'}) = p(S_{t+k} \mid S_t, S_{t+k'}) p(A_t \mid S_t, S_{t+k})$, we can simplify the KL term
\begin{equation*}
\begin{split}
& I(A_t ; S_{t+k} \mid S_t, S_{t+k'}) \\
& \quad = \mathbb{E}_{S_t, S_{t+k^{\prime}}, (S_{t+k} \mid S_t, S_{t+k^{\prime}})}\left[ D_{\mathrm{KL}}\left(p\left(A_t \mid S_t, S_{t+k}\right) \Vert p\left(A_t \mid S_t, S_{t+k^{\prime}}\right)\right) \right].
\end{split}
\end{equation*}

To see when this inequality is strict (i.e., when mutual information strictly decreases with an increase in relabeling horizon), we compare the intermediate action posterior $p(A_t \mid S_t, S_{t+k})$ and the future action posterior $p(A_t \mid S_t, S_{t+k'})$ within the KL term. We instantiate some start state $s_t$ and future state $s_{t+k'}$ to focus on the KL term inside the expectation. By the law of total probability, the future posterior is the marginalization of the intermediate posteriors over all possible intermediate states:
\begin{equation*}
p(A_t \mid s_t, s_{t+k'}) = \mathbb{E}_{S_{t+k} \mid s_t, s_{t+k'}} [p(A_t \mid s_t, S_{t+k})].
\end{equation*}

Now, to make the KL term non-zero, we need $\mathbb{E}_{S_{t+k} \mid s_t, s_{t+k'}} [p(A_t \mid s_t, S_{t+k})] \neq p(A_t \mid s_t, s_{t+k})$ for some instantiation $s_{t+k}$. In other words, the intermediate posterior $p(A_t \mid s_t, S_{t+k})$ cannot be almost-surely constant with respect to $S_{t+k}$.

For the intermediate posterior $p(A_t \mid s_t, S_{t+k})$ to vary depending on $S_{t+k}$, we need at least two instantiations, $s_{t+k}^{(1)}$ and $s_{t+k}^{(2)}$, that simultaneously satisfy the two conditions below:

\textbf{Merging paths:} both intermediate states must be viable nodes on the path from the start state $s_t$ to the future state $s_{t+k'}$. In other words, they must both occur with non-zero probability given the endpoints:
\begin{equation*}
p(s_{t+k}^{(1)} \mid s_t, s_{t+k'}) > 0 \quad \text{and} \quad p(s_{t+k}^{(2)} \mid s_t, s_{t+k'}) > 0
\end{equation*}

\textbf{Different actions:} knowing which of these intermediate states the agent passed through must change our belief about the action taken at time $t$:
\begin{equation*}
p(A_t \mid s_t, s_{t+k}^{(1)}) \neq p(A_t \mid s_t, s_{t+k}^{(2)})
\end{equation*}


The two conditions are satisfied and strict loss of information occurs in environments and datasets where distinct trajectories eventually ``merge'' at a shared future state. Intuitively, this can occur in two common scenarios. First, when learning from a sub-optimal policy, subsequent behavior can \textit{overwrite} the effects of earlier actions. An agent might make a mistake and later correct it, reaching the same future state as a trajectory that took an optimal path. Second, merging occurs when there are multiple valid ways to accomplish the same goal. For example, in a task requiring an agent to pick and place three cubes, there are six valid orderings that all lead to the same final state. In both scenarios, longer task and relabeling horizons provide more opportunities for these action-distinguishable trajectories to merge. Thus, temporally distant relabeled goals become progressively less informative about the original action.

Our analysis extends to distributions over relabeling offsets when $K \sim q_{\text{HR}}$ is sampled independently of the dataset trajectory. The chain rule for mutual information gives $I_{\text{HR}}(A; G \mid S) = \mathbb{E}_{K \sim q_{\text{HR}}}[I(A_t; S_{t+K} \mid S_t)] - I_{\text{HR}}(A; K \mid S, G) \leq \mathbb{E}_{K \sim q_{\text{HR}}}[I(A_t; S_{t+K} \mid S_t)]$. Therefore, by \Cref{prop:mi_decay_appendix}, shifting $q_{\text{HR}}$ towards longer offsets decreases this upper bound on the action-goal mutual information.

\end{proof}

\newpage

\section{Algorithm Pseudocode}

\begin{algorithm*}
  \caption{Sobolev Bootstrapping}
  \label{alg:s2aw}
  \begin{algorithmic}[1]
    \STATE \textbf{Input}: offline dataset $\mathcal D$, goal distribution $p(g)$.
    \STATE Initialize value function $V_\beta$, target subpolicy $\pi^{\mathrm{sub}}_\phi$, and policy $\pi_\theta$.
    \WHILE {not converged}
    \STATE Train value function: $\beta \leftarrow \beta - \lambda \nabla_\beta \mathcal L_{\mathrm{GCIVL}}(\beta)$ with $(s_t, s_{t+1}) \sim p^\mathcal D, g\sim p(g)$ [Equation \ref{eq:IVL}]
    \ENDWHILE
    \WHILE {not converged}
    \STATE Train target subpolicy: $\phi \leftarrow \phi + \lambda \nabla_\phi \mathcal J_{\mathrm{AWR}}(\phi)$ with $(s_t, a, s_g) \sim p^\mathcal D$ [Equation \ref{eq:low_policy_AWR}]
    \ENDWHILE
    \WHILE {not converged}
    \STATE Train policy: $\theta \leftarrow \theta - \lambda \nabla_\theta \left( \mathcal J_\mathrm{SB}(\theta) - \mathcal J_\mathrm{SAW}(\theta) \right)$ with $(s_t, a, s_g) \sim p^\mathcal D, g\sim p(g)$ [Equation \ref{eq:sobolev_saw_obj} \& \ref{eq:SAW_full}]
    \ENDWHILE
  \end{algorithmic}
\end{algorithm*}

\section{Algorithmic Details}
\label{appendix:saw}
In this section, we provide full details of the value and policy learning objectives for our Sobolev bootstrapping algorithm, which is instantiated on top of Subgoal Advantage-Weighted Policy Bootstrapping \citep[SAW]{zhou_flattening_2025}.

\subsection{Goal-conditioned implicit value learning (GCIVL)}

To learn a value function $V(s,g)$ that can operate both on single- and multi-step transitions, we use a goal-conditioned variant of implicit Q-learning (IQL) \citep{kostrikov_offline_2021}, which approximates the $\max$ operator with an expectile regression on in-sample state-action pairs:
\begin{align}
    \label{eq:IQL}
    &\mathcal{L}_V(\beta_V)=\mathbb{E}_{(s,a,g)\sim D}\left[L_2^{\tau}(Q_{\bar\beta_Q}(s,a,g)-V_{\beta_V}(s,g))\right] \\
    &\mathcal{L}_Q(\beta_Q)=\mathbb{E}_{(s,a,s',g)\sim D} \left[(r(s,g)+\gamma V_{{\beta}_V}(s',g)-Q_{\beta_Q}(s,a,g))^2 \right],
\end{align}

where $L_2^{\tau}(x)=|\tau-\mathbbm{1}(x < 0)| x^2$ is the expectile loss and $\tau \in [0.5,1)$ is the expectile parameter.

Following previous works \citep{park_hiql_2023}, we adopt an action-free variant of this objective, which estimates the state-action value function $Q(s,a,g)$ as $r(s,g)+\gamma \bar V(s',g)$, leading to
\begin{equation}
    \label{eq:IVL}
    \mathcal{L}_{\mathrm{GCIVL}}(\beta)=\mathbb{E}_{(s,s',g)\sim D}\left[L_2^{\tau}(r(s,g)+\gamma V_{\bar\beta}(s',g)-V_{\beta}(s,g))\right].
\end{equation}

Note that using the action-free estimator with expectile regression is optimistically biased in stochastic environments, since it directly regresses towards high-value transitions without using Q-values to marginalize over environment stochasticity. However, it performs well in practice and can be used to measure the relative advantages of intermediate \textit{subgoals} without modification.

\subsection{Offline policy extraction}

The above value iteration algorithm already performs implicit policy improvement. Then, we simply need to extract the optimal policy from the learned value function. To remain inside the convex hull of dataset actions and avoid querying the value function for out-of-distribution actions, we use a form of weighted behavioral cloning called Advantage-Weighted Regression \citep[AWR]{peng_advantage-weighted_2019}:
\begin{align}
    \label{eq:AWR}
    \mathcal{J}_{\pi_\theta}(\theta)=\mathbb{E}_{(s,a,g)\sim D}\left[\exp(\alpha(\bar Q(s,a,g)-V(s,g)))\log \pi_{\theta}(a\mid s,g)\right],
\end{align}

where $\alpha\in[0,\infty)$ is an inverse temperature parameter. Because we do not directly parameterize a critic $Q(s,a,g)$ in GCIVL [Equation \ref{eq:IVL}], we instead estimate the ``one-step'' advantage $\bar Q(s,a,g)-V(s,g)$ as $V(s',g)-V(s,g)$ to reweight the corresponding action in the transition tuple $(s,a,s')$. This estimate is biased in stochastic environments, since it optimistically assumes that the transition from $s$ to $s'$ was caused by action $a$.

\subsection{Policy bootstrapping algorithms}
Policy bootstrapping algorithms are based on the insight that, if an action $a$ is optimal for reaching a subgoal $s_g$ along the optimal trajectory toward goal $g$, then $a$ should also be optimal for reaching $g$. To evaluate the optimality of subgoals $s_g$, we can use a multistep variant of the action-free advantage estimator for GCIVL, where we approximate $A(s, s_g, g)$ as $V(s_g, g) - V(s,g)$.

\textbf{Reinforcement Learning with Imagined Subgoals [RIS]} \citep{chane-sane_goal-conditioned_2021}: To produce subgoals, RIS first learns a generative model $\pi_{\psi}^h(s_g \mid s, g)$ to produce ``imagined'' subgoals $s_g$, and then bootstraps a goal-conditioned flat policy $\pi_{\theta}(a\mid s,g)$ on a short-horizon policy $\pi^{\mathrm{sub}}(a\mid s, s_g)$ conditioned on the imaginary way-points.

The generative model $\pi_{\psi}^h(s_g \mid s, g)$ is optimized to produce subgoals with high advantages while remaining close to the dataset state distribution $\mu_D$, in a manner similar to AWR:
\begin{equation}
\label{eq:RIS_high_policy_AWR}
\mathcal{J}_{\pi_{\psi}^h}({\psi})=
\mathbb{E}_{(s,s_g,g)\sim D}\left[\exp(\alpha A(s_g \mid s,g))\log\pi_{\psi}^h(s_g \mid s,g)\right] .
\end{equation}

\textbf{Subgoal Advantage-Weighted Policy Bootstrapping [SAW]} \citep{zhou_flattening_2025}: SAW introduces a unifying view of RIS and a related hierarchical method \citep{park_hiql_2023} under the ``RL as inference'' framework \citep{levine_reinforcement_2018}. Rather than modeling the optimal subgoal distribution $p^*(s_g \mid s,g)$. SAW samples in-trajectory waypoints in dataset and uses a similar multistep advantage weight to prioritize bootstrapping on high-advantage subgoals.

\begin{equation}
\label{eq:SAW_waypoint_distribution}
p^*(s_g\mid s, g) \propto p^D(s_g\mid s)\exp(A(s_g\mid s,g)).\nonumber
\end{equation}

This eliminates the need for a parametric subgoal generator and replaces Equation \ref{eq:RIS_high_policy_AWR} and \ref{eq:RIS_flat_policy_KL} with

\begin{equation}
\label{eq:SAW_full}
\mathcal{J}^{\text{SAW}}_{\pi_{\theta}}({\theta})=
\mathbb{E}_{(s,a,s_g,g)\sim D}\left[e^{\alpha_1 A(a\mid s,g)}\log \pi_{\theta}(a\mid s,g)-
e^{\alpha_2 A(s_g\mid s,g)}D_{\text{KL}}(\pi_{\theta}\|\pi^{\mathrm{sub}})\right],
\end{equation}

where the target subpolicy $\pi^{\mathrm{sub}}$ is extracted with Equation \ref{eq:low_policy_AWR}, $\pi_{\theta}$ is a shorthand for $\pi_{\theta}^f(\cdot\mid s,g)$, and $\pi^{\mathrm{sub}}$ denotes $\pi^{\mathrm{sub}}(\cdot \mid s_,s_g)$.

Following \citet{zhou_flattening_2025}, we implement the offline variant of RIS and SAW with an independently parameterized short-horizon policy with AWR, instead of using an exponential moving average of the long-horizon policy $\pi^{\theta}$ as in the original implementation by \citet{chane-sane_goal-conditioned_2021}:

\begin{equation}
\label{eq:low_policy_AWR}
\mathcal{J}_{\pi^{\mathrm{sub}}}(\phi)=
\mathbb{E}_{(s,a,s_g)\sim D}\left[\exp(\alpha A(a\mid s,s_g))\log \pi^{\mathrm{sub}}(a\mid s,s_g)\right].
\end{equation}

Finally, we train the long-horizon policy $\pi_{\theta}$ with ``one-step'' transitions while regressing it toward the short-horizon policy $\pi^{\mathrm{sub}}$ conditioned on generated subgoals:

\begin{equation}
\label{eq:RIS_flat_policy_KL}
\mathcal{J}^{\text{RIS}}_{\pi_{\theta}}({\theta})=
\mathbb{E}_{(s,a,g)\sim D, s_g\sim\pi_{\psi}^h(\cdot\mid s,g)}\left[\exp( \alpha A(a\mid s,g))\log \pi_{\theta}(a\mid s,g)  - D_{\text{KL}}(\pi_{\theta}\|\pi^{\mathrm{sub}})\right],
\end{equation}

where the ``one-step'' advantage $A(a\mid s,g)$ is estimated by $V_{\beta_{V}}(s',g)-V_{\beta_{V}}(s,g)$, $\pi_{\theta}$ is a shorthand for $\pi_{\theta}^f(\cdot\mid s,g)$, and $\pi^{\mathrm{sub}}$ denotes $\pi^{\mathrm{sub}}(\cdot \mid s_,s_g)$.

\pagebreak

\section{Additional Figures}

\begin{figure}[h!]
  \begin{center}
  \includegraphics[width=0.9\linewidth]{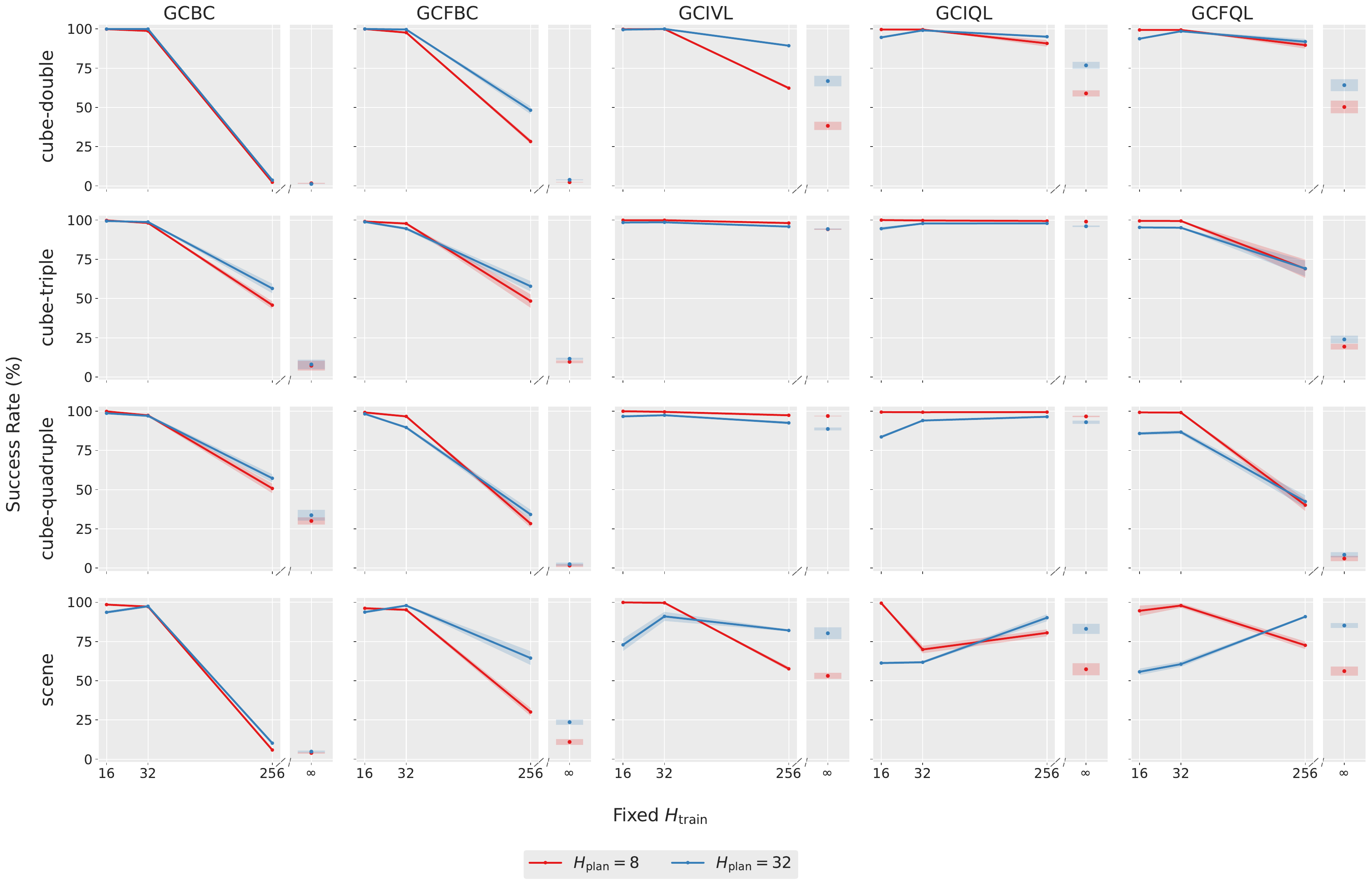}
  \caption{\textbf{Training on distant relabeled goals harms \textit{nearby} goal-reaching capabilities}. We record the success rates across learned policies trained on a uniform distribution of relabeling goal offsets $[1, H_\mathrm{train}]$, where $H_\mathrm{train}=\infty$ corresponds to a uniform distribution over all future in-trajectory states. At evaluation, policies are given optimal $H_\mathrm{plan}$-step subgoals from oracle planners. Success rates are averaged over the last three checkpoints and three seeds.}
  \label{fig:uniform_horizon_oracle_success_all}
  \end{center}
\end{figure}

\begin{figure}[h!]
  \begin{center}
  \includegraphics[width=0.9\linewidth]{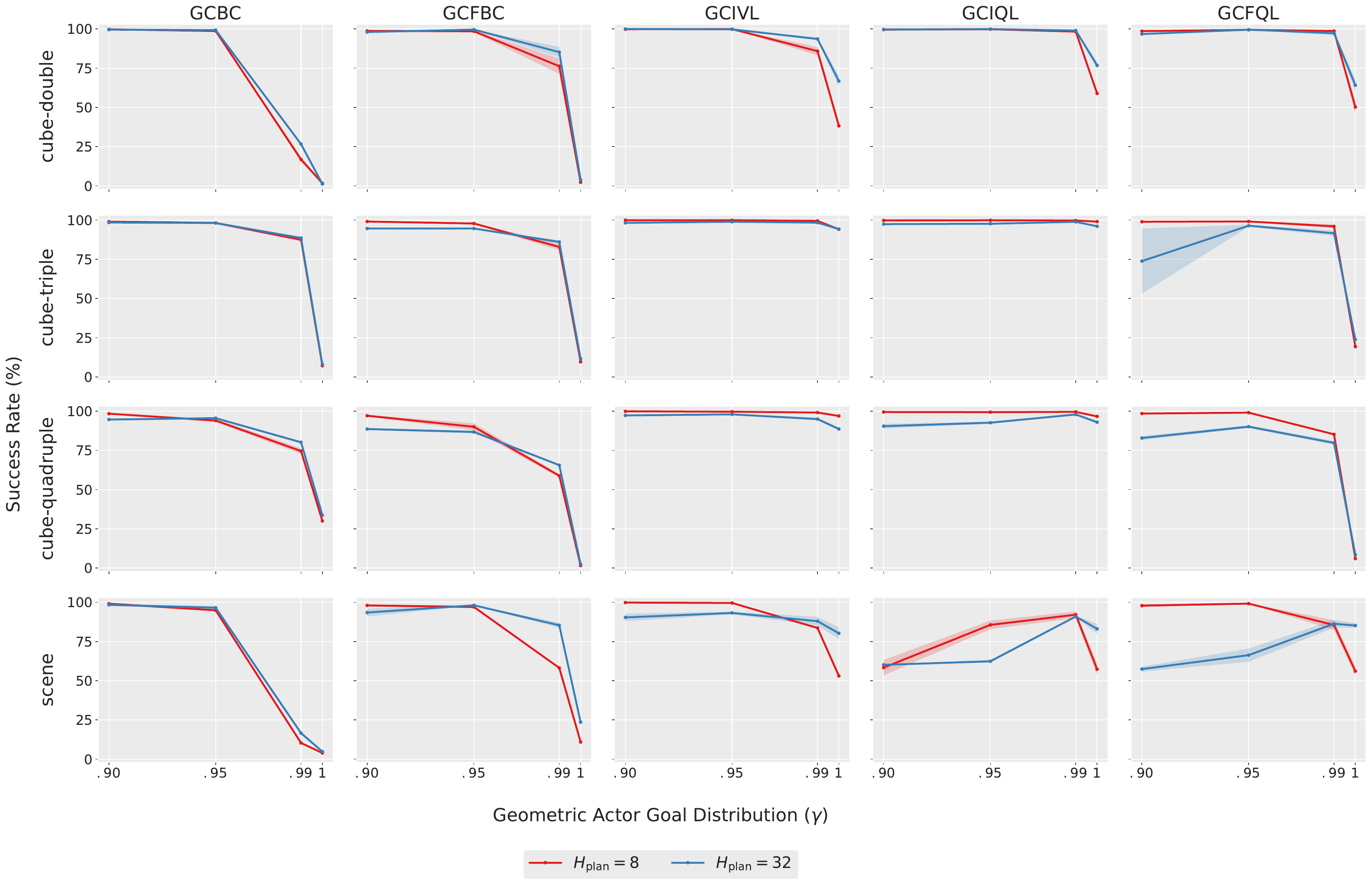}
  \caption{\textbf{Training on a larger proportion of distant relabeled goals harms \textit{nearby} goal-reaching capabilities}. We record the success rates across learned policies trained on a geometric distribution of relabeling goal offsets with discount $\gamma$, where $\gamma=1$ corresponds to a uniform distribution over future states. At evaluation, policies are given optimal $H_\mathrm{plan}$-step subgoals from oracle planners. Success rates are averaged over the last three checkpoints and three seeds.}
  \label{fig:all_geom_horizon_oracle_success}
  \end{center}
\end{figure}

\pagebreak

\begin{figure}[h!]
  \begin{center}
  \includegraphics[width=0.8\linewidth]{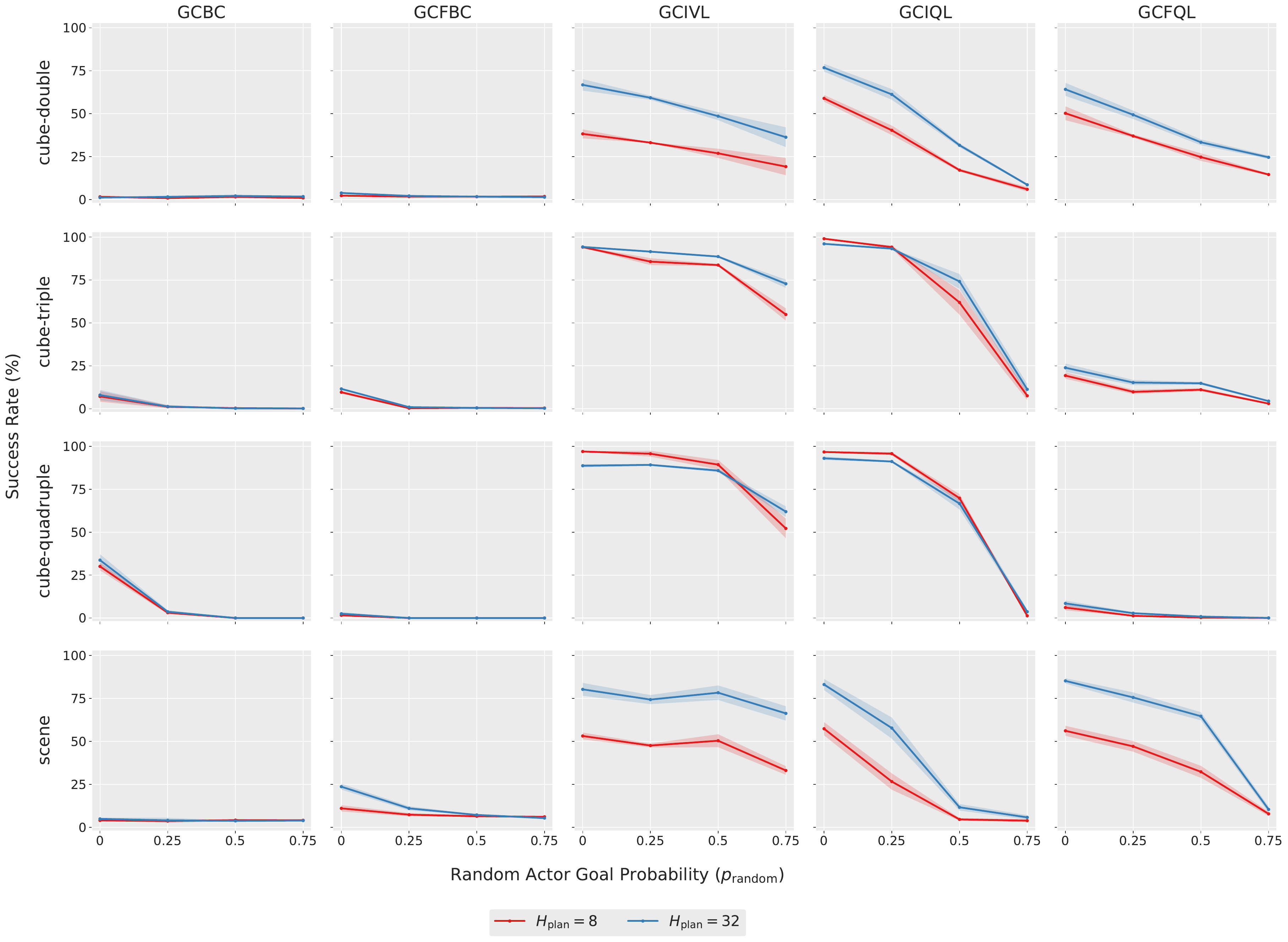}
  \caption{\textbf{Training on more random relabeled goals harms \textit{nearby} goal-reaching capabilities}. We record the success rates across learned policies trained on a mixture of goals either sampled from a uniform distribution over future states or from an arbitrary dataset state with probability $p_\mathrm{rand} \in \{ 0.25, 0.50, 0.75 \}$. At evaluation, policies are given optimal $H_\mathrm{plan}$-step subgoals from oracle planners. Success rates are averaged over the last three checkpoints and three seeds.}
  \label{fig:all_random_oracle_success}
  \end{center}
\end{figure}

\begin{figure}[h!]
  \begin{center}
  \includegraphics[width=0.9\linewidth]{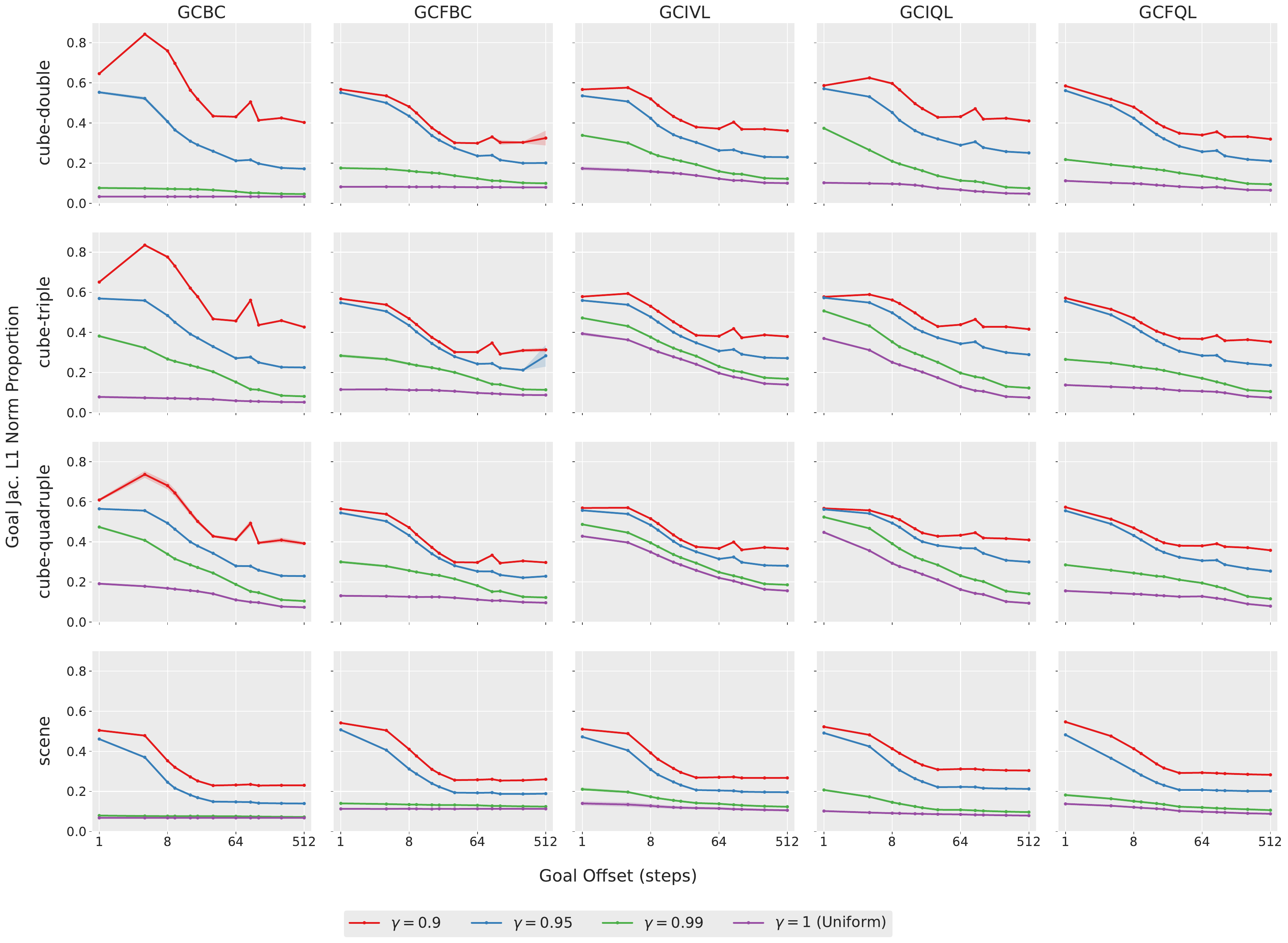}
  \caption{\textbf{Increasing training goal horizon \textit{globally} decreases policy goal sensitivity}. We measure the proportion of the input Jacobian norm corresponding to the goal at fixed state-goal offsets, plotted with $\log_2$ scaling. Results are averaged over the last three training checkpoints, three seeds, and 1024 randomly sampled state-goal pairs per horizon.}
  \label{fig:geom_input_jac_all}
  \end{center}
\end{figure}

\pagebreak

\begin{figure}[h!]
  \begin{center}
  \includegraphics[width=0.9\linewidth]{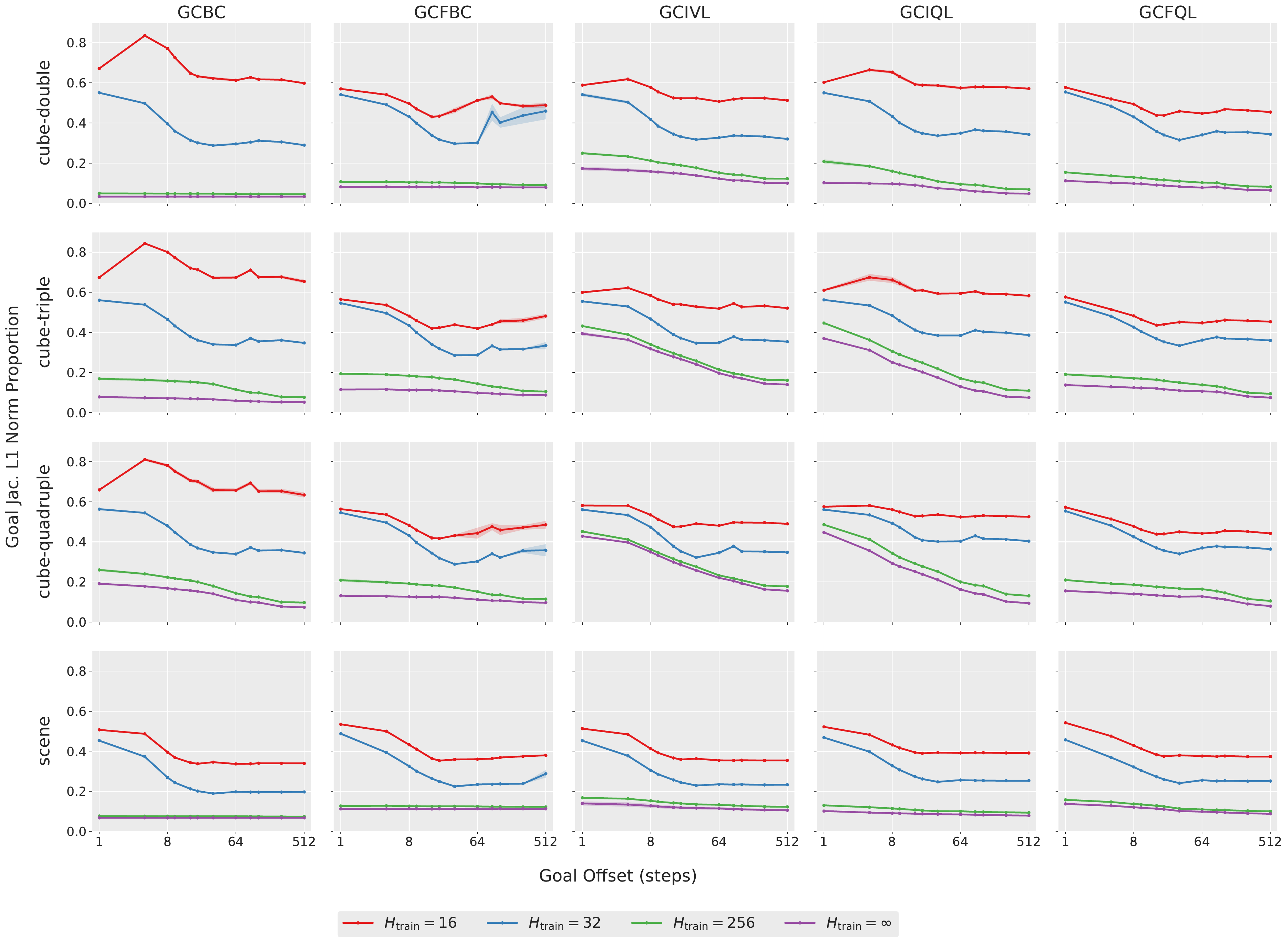}
  \caption{\textbf{Training on distant relabeled goals \textit{globally} decreases policy goal sensitivity}. We measure the proportion of the input Jacobian norm corresponding to the goal at fixed state-goal offsets, plotted with $\log_2$ scaling. Results are averaged over the last three training checkpoints, three seeds, and 1024 randomly sampled state-goal pairs per horizon.}
  \label{fig:uniform_input_jac_all}
  \end{center}
\end{figure}

\begin{figure}[h!]
  \begin{center}
  \includegraphics[width=0.85\linewidth]{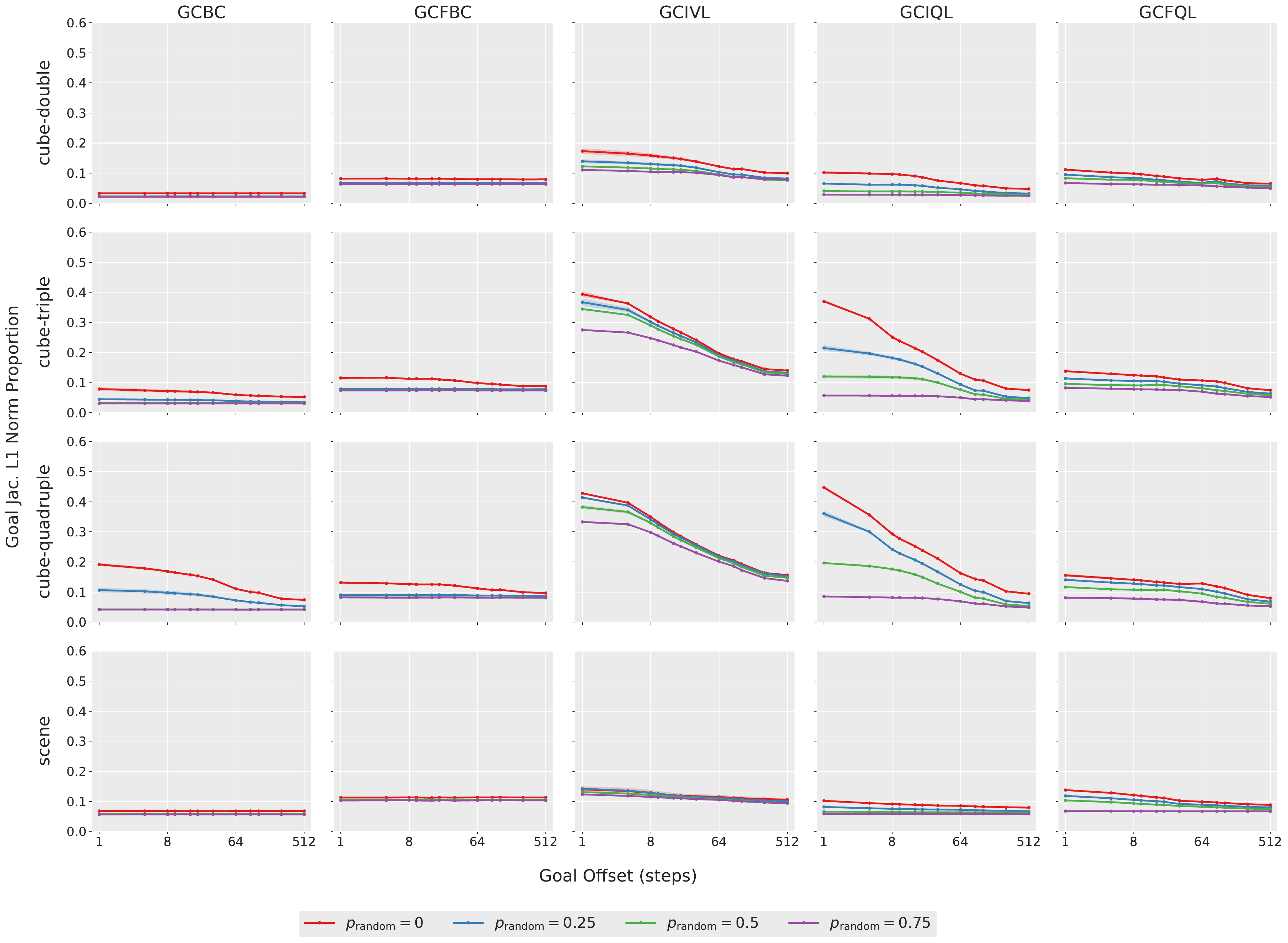}
  \caption{\textbf{Random goal sampling \textit{globally} decreases policy goal sensitivity.} We directly reduce the action-goal MI by replacing in-trajectory goals with randomly sampled dataset states with probability $p_{\text{random}} \in \{0.25, 0.5, 0.75\}$, and measure the goal Jacobian norm proportion at fixed state-goal offsets.}
  \label{fig:all_random_actor_goals}
  \end{center}
\end{figure}

\pagebreak

\begin{figure}[h!]
  \begin{center}
  \includegraphics[width=0.7\linewidth]{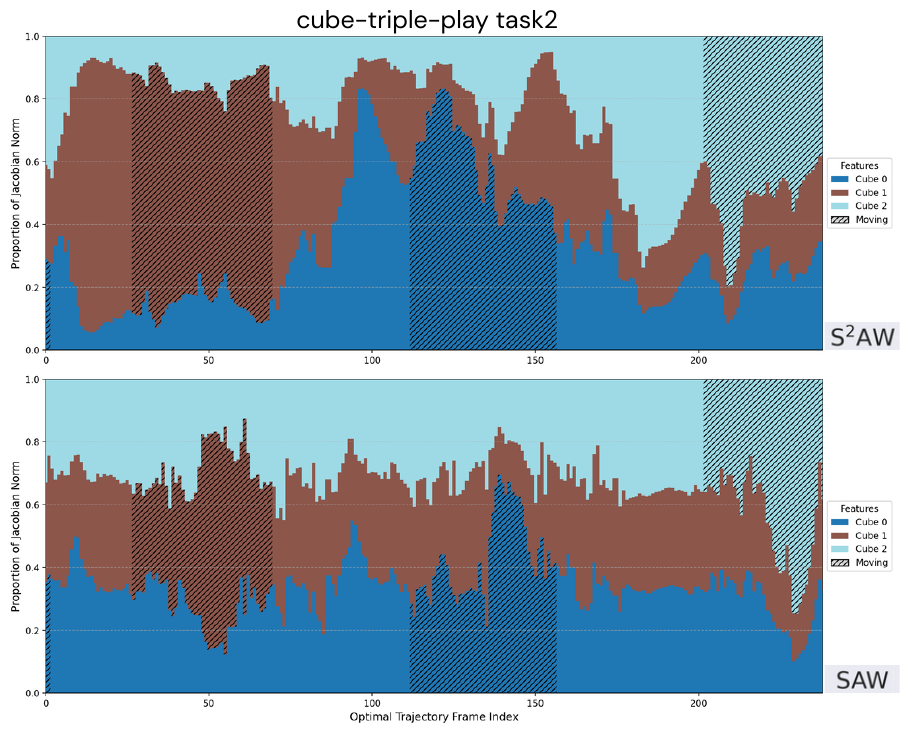}
  \caption{\textbf{Sobolev bootstrapping pays attention to the correct object during optimal trajectories.}
   To enable fair comparisons across later task stages, we plot the input Jacobian norms with respect to each object over an independent scripted optimal trajectory \cite{ahn_option-aware_2025} for the \texttt{cube-triple} triple pick-and-place task. S$^2$AW (top) allocates a higher proportion of the Jacobian norm to the focal cube both during approach and grasp, whereas SAW's (bottom) Jacobian norm only changes after the cube is already grasped. The shaded area denotes when the cube is actively grasped.}
  \label{fig:saw_vs_s2aw_saliency_map}
  \end{center}
\end{figure}

\pagebreak

\section{Hyperparameters}

We provide hyperparameters related to our Sobolev bootstrapping objective in Table \ref{table:sobolev_saw_hyperparameters}. We use value learning hyperparameters from \citet{zhou_flattening_2025} for HIQL, SAW, and S$^2$AW in \texttt{cube-single}, which significantly improves performance. If not listed, all other hyperparameters follow the default values from \citet{park_ogbench_2024} and \citet{zhou_flattening_2025}.

\label{appendix:hparams}
\begin{table}[h]
  \caption{
  \footnotesize
  \textbf{Hyperparameters for SAW with Sobolev bootstrapping.}
  }
  \label{table:sobolev_saw_hyperparameters}
  \centering
  \scalebox{0.7}
  {
  \begin{tabular}{lccccc}
  \toprule
  \textbf{Dataset} & \textbf{Expectile $\tau$} & \textbf{AWR $\alpha$} & \textbf{KLD $\beta$} & \textbf{Subgoal steps $k$} & \textbf{Sobolev weight $\lambda$}\\
  \midrule
  \texttt{pointmaze-medium-navigate-v0} & $0.7$ & $3.0$ & $3.0$ & $25$ & $0.03$\\
  \texttt{pointmaze-large-navigate-v0} & $0.7$ & $3.0$ & $3.0$ & $25$ & $1.00$\\
  \texttt{pointmaze-giant-navigate-v0} & $0.7$ & $3.0$ & $3.0$ & $25$ & $1.00$\\
  \cmidrule{1-6}
  \texttt{antmaze-medium-navigate-v0} & $0.7$ & $3.0$ & $3.0$ & $25$ & $0.03$\\
  \texttt{antmaze-large-navigate-v0} & $0.7$ & $3.0$ & $3.0$ & $25$ & $0.03$\\
  \texttt{antmaze-giant-navigate-v0} & $0.7$ & $3.0$ & $3.0$ & $25$ & $0.30$\\
  \cmidrule{1-6}
  \texttt{humanoidmaze-medium-navigate-v0} & $0.7$ & $3.0$ & $3.0$ & $100$ & $0.03$\\
  \texttt{humanoidmaze-large-navigate-v0} & $0.7$ & $3.0$ & $3.0$ & $100$ & $0.03$\\
  \texttt{humanoidmaze-giant-navigate-v0} & $0.7$ & $3.0$ & $3.0$ & $100$ & $0.03$\\
  \cmidrule{1-6}
  \texttt{cube-single-play-v0} & $0.9$ & $3.0$ & $0.3$ & $10$ & $0.02$\\
  \texttt{cube-double-play-v0} & $0.7$ & $3.0$ & $1.0$ & $10$ & $0.02$\\
  \texttt{cube-triple-play-v0} & $0.7$ & $3.0$ & $1.0$ & $10$ & $0.03$\\
  \texttt{cube-quadruple-play-v0} & $0.7$ & $3.0$ & $1.0$ & $10$ & $0.03$\\
  \cmidrule{1-6}
  \texttt{scene-play-v0} & $0.7$ & $3.0$ & $1.0$ & $10$ & $0.02$\\
  \bottomrule
  \end{tabular}
}
\end{table}

\subsection{Sensitivity Analysis}

We examine sensitivity to the choice of Sobolev weight $\lambda$ in Table \ref{table:sobolev_saw_hyperparameter_sensitivity}. We do not perform additional tuning of the expectile $\tau$, AWR $\alpha$, subgoal steps $k$, and KLD $\beta$, which are common hyperparameters shared by HIQL and SAW. Our results show that smaller weights can be insufficient to realize the benefits of Sobolev training, while larger weights may “oversmooth” in some manipulation environments. Nevertheless, we observe consistently substantial gains in \texttt{cube-triple} and \texttt{antmaze-giant} across all values of $\lambda$.

\begin{table}[h]
\caption{
  \footnotesize
\textbf{Sobolev loss weight sensitivity analysis.} Sobolev bootstrapping is robust to different weighting factors within a reasonable scale (around 0.01 for \texttt{cube-double} and \texttt{scene}). Notably, improvements on \texttt{antmaze-giant} and \texttt{cube-triple} are robust across various weighting scales. Experimental results are averaged across four seeds.
}
\label{table:sobolev_saw_hyperparameter_sensitivity}
\centering
\renewcommand{\arraystretch}{1.0} 
\setlength{\tabcolsep}{3pt}       
\small                            
\scalebox{0.8}
{
\begin{tabular}{lcccc}
\toprule
\textbf{Dataset} & $\lambda=\mathbf{0}$ & $\mathbf{0.001}$ & $\mathbf{0.01}$ & $\mathbf{0.1}$ \\ \midrule

\texttt{cube-double-play-v0}       & $40$ {\tiny $\pm 7$} & $40$ {\tiny $\pm 11$} & ${55}$ {\tiny $\pm 7$} & $35$ {\tiny $\pm 9$}\\
\texttt{cube-triple-play-v0}       & $4$ {\tiny $\pm 2$}  & $9$ {\tiny $\pm 2$}   & ${30}$ {\tiny $\pm 7$} & $23$ {\tiny $\pm 10$} \\
\texttt{scene-play-v0}             & $63$ {\tiny $\pm 6$} & $54$ {\tiny $\pm 10$} & ${71}$ {\tiny $\pm 5$} & $62$ {\tiny $\pm 5$}\\
\texttt{antmaze-giant-navigate-v0} & $73$ {\tiny $\pm 4$} & $76$ {\tiny $\pm 4$}  & $82$ {\tiny $\pm 4$} & ${85}$ {\tiny $\pm 4$} \\ \bottomrule
\end{tabular}
}
\end{table}

\section{Computational Resources}
\label{appendix:compute}
All experiments were run on an internal cluster of NVIDIA 5090 and NVIDIA PRO 6000 Blackwell GPUs. In Table \ref{table:training_time}, We select the highest-dimensional environments and report the hours required to complete a standard 1M-step training run on an NVIDIA 5090 GPU.

\begin{table}[h]
\caption{
  \footnotesize
  \textbf{Training time comparison.}
  We select the highest-dimensional environments and report the hours required to complete a standard 1M-step training run on an NVIDIA 5090 GPU.
}
\label{table:training_time}
\centering
\renewcommand{\arraystretch}{1.0} 
\scalebox{0.8}{
\begin{tabular}{lccc}
\toprule
hours / 1M iters & \textbf{GCIQL} & \textbf{SAW} & \textbf{S$^2$AW} \\ \midrule
\texttt{cube-quadruple-play}         & 0.26 & 0.46 & 0.75 \\
\texttt{scene-play}          & 0.25 & 0.45 & 0.74 \\
\texttt{humanoidmaze-giant-navigate} & 0.26 & 0.46 & 1.75 \\ \bottomrule
\end{tabular}}
\end{table}

\vfill
\pagebreak

\section{Additional Experiments}

\subsection{Architectural ablations}
\label{appendix:arch_ablations}
While we have made the case that differences in state-goal distributions are responsible for the improved properties of the target policy $\pi^\mathrm{sub}$, we note that the experiments in Table \ref{table:results} use different architectures for each policy. The full goal-conditioned policy $\pi$ consists of a three-layer MLP, followed by a length-normalized bottleneck of dimension 10, and another three-layer MLP; the target policy consists only of a three-layer MLP. Then, our results may be explained by a natural simplicity bias introduced by smaller network sizes. To control for this potential confound, we train Sobolev SAW using identical architectures for $\pi^\mathrm{sub}$ and $\pi$, which we denote as \textbf{S$^2$AW}$_\ell$.

\begin{table*}[h!]
  \caption{
  \footnotesize
  \textbf{Architectural ablations to the Sobolev SAW target actor.}
  We report the performance of Sobolev SAW where the target actor has the same architecture as the full policy, averaged over $4$ seeds with standard deviations after the $\pm$ sign.
  Numbers within $5\%$ of the best value in the row are in \textbf{bold}.
  }
  \label{table:architecture_ablation}
  \centering
  \scalebox{0.63}
  {
  \begin{tabular}{lllllllllll}
  \toprule
  \textbf{Environment \:\:} & \textbf{Dataset} & \textbf{GCBC} & \textbf{GCIVL} & \textbf{GCIQL} & \textbf{QRL} & \textbf{CRL} & \textbf{HIQL} & \textbf{SAW} & \textbf{S$^2$AW} & \textbf{S$^2$AW}$_\ell$\\
  \midrule
  \multirow[c]{2}{*}{\texttt{maze-giant}}
   & \texttt{antmaze-giant-navigate-v0} & $0$ {\tiny $\pm 0$} & $0$ {\tiny $\pm 0$} & $0$ {\tiny $\pm 0$} & $14$ {\tiny $\pm 3$} & $16$ {\tiny $\pm 3$} & ${65}$ {\tiny $\pm 5$} & ${73}$ {\tiny $\pm 4$} & $\mathbf{87}$ {\tiny $\pm 3$} & ${80}$ {\tiny $\pm 5$}\\
   & \texttt{humanoidmaze-giant-navigate-v0} & $0$ {\tiny $\pm 0$} & $0$ {\tiny $\pm 0$} & $0$ {\tiny $\pm 0$} & $1$ {\tiny $\pm 0$} & $3$ {\tiny $\pm 2$} & ${12}$ {\tiny $\pm 4$} & ${35}$ {\tiny $\pm 4$} & $\mathbf{38}$ {\tiny $\pm 6$} & ${35}$ {\tiny $\pm 5$}\\
  \cmidrule{1-11}
   \multirow[c]{4}{*}{\texttt{cube}} & \texttt{cube-single-play-v0} & $6$ {\tiny $\pm 2$} & $53$ {\tiny $\pm 4$} & ${68}$ {\tiny $\pm 6$} & $5$ {\tiny $\pm 1$} & $19$ {\tiny $\pm 2$} & $44$ {\tiny $\pm 9$} & ${72}$ {\tiny $\pm 5$} & $\mathbf{86}$ {\tiny $\pm 5$} & ${79}$ {\tiny $\pm 8$}\\
   &  \texttt{cube-double-play-v0} & $1$ {\tiny $\pm 1$} & $36$ {\tiny $\pm 3$} & ${40}$ {\tiny $\pm 5$} & $1$ {\tiny $\pm 0$} & $10$ {\tiny $\pm 2$} & $6$ {\tiny $\pm 2$} & ${40}$ {\tiny $\pm 7$} & ${50}$ {\tiny $\pm 6$} & $\mathbf{55}$ {\tiny $\pm 8$}\\
   &  \texttt{cube-triple-play-v0} & $1$ {\tiny $\pm 1$} & $1$ {\tiny $\pm 0$} & $3$ {\tiny $\pm 1$} & $0$ {\tiny $\pm 0$} & $\mathbf{4}$ {\tiny $\pm 1$} & $3$ {\tiny $\pm 1$} & ${4}$ {\tiny $\pm 2$} & $\mathbf{30}$ {\tiny $\pm 5$} & $\mathbf{30}$ {\tiny $\pm 4$}
   \\
   &  \texttt{cube-quadruple-play-v0} & $0$ {\tiny $\pm 0$} & $0$ {\tiny $\pm 0$} & $0$ {\tiny $\pm 0$} & $0$ {\tiny $\pm 0$} & $0$ {\tiny $\pm 0$} & $0$ {\tiny $\pm 0$} & $0$ {\tiny $\pm 0$} & $\mathbf{3}$ {\tiny $\pm 2$} & ${2}$ {\tiny $\pm 1$}
   \\
\cmidrule{1-11}
  \multirow[c]{1}{*}{\texttt{scene}} & \texttt{scene-play-v0} & $5$ {\tiny $\pm 1$} & $42$ {\tiny $\pm 4$} & ${51}$ {\tiny $\pm 4$} & $5$ {\tiny $\pm 1$} & $19$ {\tiny $\pm 2$} & $38$ {\tiny $\pm 3$} & ${63}$ {\tiny $\pm 6$} & $\mathbf{73}$ {\tiny $\pm 6$} & $\mathbf{72}$ {\tiny $\pm 5$}\\
  \bottomrule
  \end{tabular}
}
\end{table*}

We find that results remain consistent across all manipulation environments where combinatorial generalization is most important, supporting the primary conclusion that the difference in training curricula, and not architectural differences, produces desirable properties of input gradients. Performance does drop slightly in locomotion environments compared to Sobolev SAW with the smaller target actor, suggesting that a smaller actor network does provide some beneficial regularization. Indeed, we observe that Sobolev bootstrapping with a smaller target policy achieves lower SAW validation loss than its counterpart.

\subsection{Extending Sobolev bootstrapping to RIS}
\label{appendix:ris_results}

Adding Sobolev bootstrapping to RIS \citep{chane-sane_goal-conditioned_2021} also yields significant gains, albeit less consistently. In our implementation of RIS, subgoals generated by $\pi_{\psi}^h$ lie in a lower-dimensional latent space rather than the raw observation space, making it impractical to compute advantages directly from the learned value function $V_{\beta_V}$. Thus, we adopt a constant weighting scheme for the Sobolev bootstrapping loss, under the assumption that all generated subgoals are near-optimal. If the subgoal generator proposes suboptimal waypoints, then the uniformly weighted Sobolev term in RIS can force the flat policy to match noisy input gradients, which can degrade its performance.

\begin{table}[h!]
  \caption{
  \footnotesize
  \textbf{Evaluating RIS with Sobolev bootstrapping.}
  We evaluate the effectiveness of Sobolev bootstrapping on RIS by comparing it against the numbers reported in \citet{zhou_flattening_2025}. Results are averaged over 4 seeds, and other settings are identical to Table \ref{table:results}.
  }
  \label{table:ris_sobolev_results}
  \centering
  \scalebox{0.8}
  {
  \begin{tabular}{llcc}
  \toprule
  \textbf{Dataset} & \textbf{RIS} & \textbf{Sobolev RIS}\\
  \midrule
  \texttt{pointmaze-medium-navigate-v0} & $88$ {\tiny $\pm 6$} & $\mathbf{96}$ {\tiny $\pm 1$}\\
  \texttt{pointmaze-large-navigate-v0} & $63$ {\tiny $\pm 13$} & $\mathbf{75}$ {\tiny $\pm 3$}\\
  \texttt{pointmaze-giant-navigate-v0} & $57$ {\tiny $\pm 12$} & $\mathbf{67}$ {\tiny $\pm 5$}\\
  \cmidrule{1-3}
  \texttt{antmaze-medium-navigate-v0} & $\mathbf{96}$ {\tiny $\pm 1$} & $\mathbf{97}$ {\tiny $\pm 1$}\\
  \texttt{antmaze-large-navigate-v0} & $\mathbf{89}$ {\tiny $\pm 3$} & $\mathbf{90}$ {\tiny $\pm 1$}\\
  \texttt{antmaze-giant-navigate-v0} & $\mathbf{65}$ {\tiny $\pm 4$} & $\mathbf{69}$ {\tiny $\pm 0$}\\
  \cmidrule{1-3}
  \texttt{humanoidmaze-medium-navigate-v0} & $73$ {\tiny $\pm 5$} & $\mathbf{84}$ {\tiny $\pm 7$}\\
  \texttt{humanoidmaze-large-navigate-v0} & $21$ {\tiny $\pm 7$} & $\mathbf{47}$ {\tiny $\pm 2$}\\
  \texttt{humanoidmaze-giant-navigate-v0} & $3$ {\tiny $\pm 2$} & $\mathbf{13}$ {\tiny $\pm 3$}\\
  \cmidrule{1-3}
  \texttt{cube-single-play-v0} & $\mathbf{81}$ {\tiny $\pm 6$} & $59$ {\tiny $\pm 3$}\\
  \texttt{cube-double-play-v0} & $36$ {\tiny $\pm 4$} & $\mathbf{41}$ {\tiny $\pm 2$}\\
  \texttt{cube-triple-play-v0} & $\mathbf{3}$ {\tiny $\pm 2$} & $\mathbf{3}$ {\tiny $\pm 0$}\\
  \texttt{cube-quadruple-play-v0} & $\mathbf{0}$ {\tiny $\pm 0$} & $\mathbf{0}$ {\tiny $\pm 0$}\\
  \cmidrule{1-3}
  \texttt{scene-play-v0} & $\mathbf{64}$ {\tiny $\pm 7$} & $\mathbf{66}$ {\tiny $\pm 1$}\\
  \bottomrule
  \end{tabular}
}
\end{table}

Furthermore, it is well-known that planning with a latent subgoal representation extracted from an intermediate layer of the value function, as done in \citet{park_hiql_2023} and our implementation of RIS, is suboptimal and can significantly hamper performance in larger environments such as \texttt{antmaze-giant} and \texttt{humanoidmaze-giant} \citep{zhou_flattening_2025,ahn_option-aware_2025}. Although not a focus of our work, we expect that improvements to the high-level policy, such as modern generative models and new representation learning objectives, may significantly improve RIS's performance.

\subsection{Additional horizon reduction baselines}
\label{appendix:additional_baselines}
Existing horizon-reduction approaches address one of two issues: compounding bias in off-policy value learning, e.g., $n$-step TD \citep{park_horizon_2025}, QC \citep{li_reinforcement_2025, li_decoupled_2026}, OTA \citep{ahn_option-aware_2025}; or noisy advantage estimates during policy extraction, e.g., HIQL \citep{park_hiql_2023}, SAW \citep{zhou_flattening_2025}. In contrast, our work identifies an additional bottleneck within the actor itself, related to decreased policy sensitivity to goal information scaling with horizon.

For completeness, we compare our method against OTA \citep{ahn_option-aware_2025}, DQC \citep{li_decoupled_2026}, and SHARSA \citep{park_horizon_2025}. Because the original OTA paper did not report results on state-based manipulation tasks, we use the official implementation and sweep the abstraction factor $n \in \{5, 10, 20\}$. Furthermore, adapting DQC and SHARSA to our standard dataset is non-trivial, as they were evaluated using larger datasets with oracle representations. To ensure a fair comparison, we adopt the documented hyperparameters and use the standard batch and model sizes consistent with the OGBench suite. Our results demonstrate that OTA, DQC, and SHARSA underperform with the standard datasets.

\begin{table}[h]
\caption{
  \footnotesize
  \textbf{Additional horizon reduction baselines}. 
   The additional baselines do not readily generalize to the standard datasets and perform poorly. All results are averaged across 4 seeds.
}
\centering
\renewcommand{\arraystretch}{1.0} 
\scalebox{0.8}{
\begin{tabular}{lcccccc}
\toprule
\textbf{Environment} & \textbf{OTA n=5} & \textbf{OTA n=10} & \textbf{OTA n=20} & \textbf{SHARSA} & \textbf{DQC} & \textbf{S$^2$AW} \\ \midrule
\texttt{antmaze-giant-navigate} & 77{\scriptsize $\pm 4$} & - & - & 72{\scriptsize $\pm 1$} & 25{\scriptsize $\pm 6$} & 87{\scriptsize $\pm 3$} \\
\texttt{cube-double-play}       & 2{\scriptsize $\pm 0$}  & 3{\scriptsize $\pm 1$} & 2{\scriptsize $\pm 1$} & 5{\scriptsize $\pm 1$} & 24{\scriptsize $\pm 4$} & 57{\scriptsize $\pm 6$} \\
\texttt{cube-triple-play}       & 1{\scriptsize $\pm 0$}  & 0{\scriptsize $\pm 0$} & 1{\scriptsize $\pm 0$} & 2{\scriptsize $\pm 1$} & 11{\scriptsize $\pm 4$} & 30{\scriptsize $\pm 5$} \\ \bottomrule
\end{tabular}}
\end{table}

\subsection{Uniform Jacobian regularization}
\label{appendix:l1_l2_jacobian_reg}
\begin{table}[H]
\caption{
  \footnotesize
  \textbf{Uniform Jacobian regularization.} 
  We apply $\ell_1$ and $\ell_2$ norm penalization with coefficient $0.3$ to the SAW baseline. We observe that $\ell_1$ penalization is more effective, leading to notable performance gains in \texttt{cube-double} and \texttt{scene} environments. Results are averaged across four seeds and five evaluation tasks.
}
\label{table:jac_reg}
\centering
\renewcommand{\arraystretch}{1.0} 
\scalebox{0.8}{
\begin{tabular}{lcccc}
\toprule
 & \textbf{SAW} & \textbf{SAW$_{\ell_1}$} & \textbf{SAW$_{\ell_2}$} & \textbf{S$^2$AW} \\ \midrule
\texttt{antmaze-giant-navigate} & 73{\scriptsize $\pm 4$} & 72{\scriptsize $\pm 3$}  & 74{\scriptsize $\pm 2$} & 87{\scriptsize $\pm 3$} \\
\texttt{cube-double-play}         & 40{\scriptsize $\pm 7$} & 47{\scriptsize $\pm 10$} & 37{\scriptsize $\pm 7$} & 57{\scriptsize $\pm 6$} \\
\texttt{cube-triple-play}         & 4{\scriptsize $\pm 2$}  & 5{\scriptsize $\pm 4$}   & 4{\scriptsize $\pm 1$}  & 30{\scriptsize $\pm 5$} \\
\texttt{scene-play}         & 63{\scriptsize $\pm 6$} & 69{\scriptsize $\pm 8$}  & 62{\scriptsize $\pm 2$} & 74{\scriptsize $\pm 6$} \\ \bottomrule
\end{tabular}}
\end{table}

\end{document}